\newtheorem{theorem}{Theorem}
\newtheorem{lemma}[theorem]{Lemma}
\newtheorem{proposition}[theorem]{Proposition}
\newtheorem{definition}[theorem]{Definition}
\newtheorem{claim}[theorem]{Claim}
\newcommand{\savehyperref}[2]{\texorpdfstring{\hyperref[#1]{#2}}{#2}}
\let\pref=\prettyref
\newcommand{\sq}{\hbox{\rlap{$\sqcap$}$\sqcup$}}
\newcommand{\qed}{\hspace*{\fill}\sq}
\newenvironment{proof}{\noindent {\it Proof.}\ }{\qed\par\vskip 4mm\par}
\renewcommand{\Pr}{\mathbf{Pr}}
\DeclareMathOperator*{\E}{{\mathop{\mathbb{E}}}}
\newcommand{\utest}{\textsc{Test}}
\newcommand{\ntest}{\textsc{WeakTest}}
\newcommand{\BMW}{\textsc{BasicMW}}
\newcommand{\IMW}{\textsc{MWeights}}
\newcommand{\eps}{{\epsilon}}
\newcommand{\err}{\mathrm{err}}
\newcommand{\tT}{\tilde{{T}}}
\newcommand{\tQ}{\tilde{Q}}
\newcommand{\calA}{\mathcal{A}}
\newcommand{\calE}{\mathcal{E}}
\newcommand{\calF}{\mathcal{F}}
\newcommand{\calG}{\mathcal{G}}
\newcommand{\calJ}{\mathcal{J}}
\newcommand{\calL}{\mathcal{L}}
\newcommand{\calX}{\mathcal{X}}
\newcommand{\calY}{\mathcal{Y}}
\newcommand{\bbS}{\mathbb{S}}
\title{Tight Bounds for Collaborative PAC Learning via Multiplicative Weights}
\author{
  Jiecao Chen\\
  Computer Science Department\\
  Indiana University at Bloomington\\
  \texttt{jiecchen@iu.edu}\\
  \And
  Qin Zhang\\
  Computer Science Department\\
  Indiana University at Bloomington\\
  \texttt{qzhangcs@indiana.edu}\\
  \And
  Yuan Zhou\\
  Computer Science Department\\
  Indiana University at Bloomington\\
  and\\
  Department of Industrial and Enterprise Systems Engineering\\ 
  University of Illinois at Urbana-Champaign\\
  \texttt{yuanz@illinois.edu}
}
\begin{document}

\maketitle

\begin{abstract}
  We study the collaborative PAC learning problem recently proposed in Blum  et al.~\cite{BHPQ17}, in which we have $k$ players and they want to learn a target function collaboratively, such that the learned function approximates the target function well on all players' distributions simultaneously. The quality of the collaborative learning algorithm is measured by the ratio between the sample complexity of the algorithm and that of the learning algorithm for a single distribution (called the overhead).  We obtain a collaborative learning algorithm with overhead $O(\ln k)$, improving the one with overhead $O(\ln^2 k)$ in \cite{BHPQ17}.  We also show that an $\Omega(\ln k)$ overhead is inevitable when $k$ is polynomial bounded by the VC dimension of the hypothesis class.  Finally, our experimental study has demonstrated the superiority of our algorithm compared with the one in Blum  et al.~\cite{BHPQ17} on real-world datasets.
\end{abstract}

\section{Introduction}
\label{sec:intro}

In this paper we study the collaborative PAC learning problem recently proposed in Blum et al.~\cite{BHPQ17}.  In this problem we have an instance space $\calX$, a label space $\calY$, and an unknown target function $f^* : \calX \to \calY$ chosen from the hypothesis class $\calF$. We have $k$ players with distributions $D_1, D_2, \ldots, D_k$ labeled by the target function $f^*$.   Our goal is to \emph{probably approximately correct (PAC)} learn the target function $f^*$ for {\em every} distribution $D_i$. That is, for any given parameters $\eps, \delta > 0$, we need to return a function $f$ so that with probability $1 - \delta$, $f$ agrees with the target $f^*$ on instances of at least $1 - \eps$ probability mass in $D_i$ for every player $i$.

As a motivating example, consider a scenario of personalized medicine where a pharmaceutical company wants to obtain a prediction model for dose-response relationship of a certain drug based on the genomic profiles of individual patients. While existing machine learning methods are efficient to learn the model with good accuracy for the whole population, for fairness consideration, it is also desirable to ensure the model accuracies among demographic subgroups, e.g. defined by gender, ethnicity, age, social-economic status and etc., where each of them is associated with a label distribution. 

We will be interested in the ratio between the sample complexity required by the best collaborative learning algorithm and that of the learning algorithm for a single distribution, which is called the \emph{overhead} ratio. A na\"ive approach for collaborative learning is to allocate a uniform sample budget for each player distribution, and learn the model using all collected samples. In this method, the players do minimal collaboration with each other and it leads to an $\Omega(k)$ overhead for many hypothesis classes (which is particularly true for the classes with fixed VC dimension -- the ones we will focus on in this paper).  In this paper we aim to develop a collaborative learning algorithm with the optimal overhead ratio.


\paragraph{Our Results.} We will focus on the hypothesis class $\calF = \{f: \calX \to \calY\}$ with VC dimension $d$. For every $\eps, \delta > 0$, let $\bbS_{\eps, \delta}$ be the sample complexity needed to $(\eps, \delta)$-PAC learn the class $\calF$. It is known  that there exists an $(\eps, \delta)$-PAC learning algorithm $\calL_{\eps, \delta, \calF}$ with  $\bbS_{\eps, \delta} = O\left(\frac{1}{\eps} \left(d  + \ln \delta^{-1}\right)\right)$~\cite{Hanneke16}. We remark that we will use the algorithm $\calL$ as a blackbox, and therefore our algorithms can be easily extended to  other hypothesis classes given their single-distribution learning algorithms.

Given a function $g$ and a set of samples $T$, let $\err_{T}(g) = \Pr_{(x, y) \in T}[g(x) \neq y]$ be the \emph{error} of $g$ on $T$. Given a distribution $D$ over $\calX \times \calY$, define $\err_{D}(g)  = \Pr_{(x, y) \sim D}[g(x) \neq y]$ to be the \emph{error} of $g$ on $D$. The $(\eps, \delta)$-PAC $k$-player collaborative learning problem can be rephrased as follows: For player distributions $D_1, D_2, \dots, D_k$ and a target function $f^* \in \calF$, our goal is to learn a function $g : \calX \to \calY$ so that 
$\Pr[\forall i = 1, 2, \dots k, \err_{D_i}(f^*, g) \leq \eps] \geq 1 - \delta$.
Here we allow the learning algorithm to be \emph{improper}, that is, the learned function $g$ does not have to be a member of $\calF$.

Blum et al.\  \cite{BHPQ17} showed an algorithm with sample complexity $O\left(\frac{\ln^2 k}{\eps}\left((d + k) \ln \eps^{-1} + k \ln \delta ^{-1}\right)\right)$. When $k = O(d)$, this leads to an overhead ratio of $O(\ln^2 k)$ (assuming $\eps$, $\delta$ are constants). In this paper we propose an algorithm with sample complexity $O\left(\frac{(\ln k + \ln \delta^{-1})(d + k)}{\eps}\right)$ (\pref{thm:main-2}), which gives an overhead ratio of $O(\ln k)$ when $k = O(d)$ and for constant $\delta$, matching the $\Omega(\ln k)$ lower bound proved in Blum et al.\  \cite{BHPQ17}. 

Similarly to the algorithm in Blum et al.\  \cite{BHPQ17}, our algorithm runs in rounds and return the plurality of the functions computed in each round as the learned function $g$. In each round, the algorithm adaptively decides the number of samples to be taken from each player distribution, and calls $\calL$ to learn a function. While the algorithm in Blum et al.\  \cite{BHPQ17} uses a grouping idea and evenly takes samples from the distribution in each group, our algorithm adopts the multiplicative weight method. In our algorithm, each player distribution is associated with a weight which helps to direct the algorithm to distribute the sample budget among all player distributions. After each round, the weight for a player distribution increases if the function learned in the round is not accurate on the distribution, letting the algorithm pay more attention to it in the future rounds. We will first present a direct application of the multiplicative weight method which leads to a slightly worse sample complexity bound (\pref{thm:main-1}), and then prove \pref{thm:main-2} with more refined algorithmic ideas.

On the lower bound side, the lower bound result in Blum et al.\  \cite{BHPQ17} is only for the special case when $k = d$. We extend their result to every $k$ and $d$. In particular, we show that the sample complexity for collaborative learning has to be $\Omega(\max\{d\ln k, k \ln d\}/\eps)$ for constant $\delta$ (\pref{thm:lb}). Therefore, the sample complexity of our algorithm is optimal when $k = d^{O(1)}$. \footnote{We note that this is a stronger statement than the earlier one on the ``the optimal overhead ratio of $O(\ln k)$ for $k = O(d)$'' in several aspects. First, the showing the optimal overhead ratio only needs a minimax lower bound; while in the latter statement we claim the optimal sample complexity for every $k$ and $d$ in the range. Second, our latter statement works for a much wider parameter range for $k$ and $d$. } 

Finally, we have implemented our algorithms and compared with the one in Blum et al.\  \cite{BHPQ17} and the na\"ive method on several real-world datasets. Our experimental results demonstrate the superiority of our algorithm in terms of the sample complexity.

\paragraph{Related Work.}
As mentioned, collaborative PAC learning was first studied in Blum et al.\ \cite{BHPQ17}.  Besides the problem of learning one hypothesis that is good for all players' distributions (called the {\em centralized collaborative learning} in \cite{BHPQ17}), the authors also studied the case in which we can use different hypotheses for different distributions (called {\em personalized collaborative learning}).  For the personalized version they obtained an $O(\ln k)$ overhead in sample complexity.  Our results show that we can obtain the same overhead for the (more difficult) centralized version.  In a concurrent work \cite{nguyen2018improved}, the authors showed the similar results as in our paper.

Both our algorithms and Adaboost \cite{freund1997decision} use the multiplicative weights method. While Adaboost places weights on the samples in the prefixed training set, our algorithms place weights on the distributions of data points, and adaptively acquire new samples to achieve better accuracy. Another important feature of our improved algorithm is that it tolerates a few ``failed rounds'' in the multiplicative weights method, which requires more efforts in its analysis and is crucial to shaving the extra $\ln k$ factor when $k = \Theta(d)$.

Balcan et al.\ \cite{BBFM12} studied the problem of finding a hypothesis that approximates the target function well on the joint mixture of $k$ distributions of $k$ players. They focused on minimizing the communication between the players, and allow players to exchange not only samples but also hypothesis and other information.  Daume et al.~\cite{DPSV12a,DPSV12b} studied the problem of computing linear separators in a similar distributed communication model.  The communication complexity of distributed learning has also been studied for a number of other problems, including principal component analysis~\cite{LBKW14}, clustering~\cite{BEL13,GYZ17}, multi-task learning~\cite{WKS16}, etc.

Another related direction of research is the multi-source domain adaption problem \cite{MMR08}, where we have $k$ distributions, and a hypothesis with error at most $\eps$ on each of the $k$ distributions. The task is to combine the $k$ hypotheses to a single one which has error at most $k \eps$ on any mixture of the $k$ distribution. This problem is different from our setting in that we want to learn the ``global'' hypothesis from scratch instead of combine the existing ones.



\section{The Basic Algorithm}
\label{sec:basic}

In this section we propose an algorithm for collaborative learning using the multiplicative weight method.  The algorithm is described in \pref{alg:main-1}, using \pref{alg:test} as a subroutine.  

\begin{multicols}{2}

We briefly describe \pref{alg:main-1} in words.  We start by giving a unit weight to each of the $k$ player.  The algorithm runs in $T = O(\ln k)$ rounds, and players' weights will change at  each round.  At round $t$, we take a set of samples $S^{(t)}$ from the average distribution of the $k$ players weighted by their weights. We then learn a classifier $g^{(t)}$ for samples in $S^{(t)}$, and test for each player $i$ whether $g^{(t)}$ agrees with the target function $f^*$ with probability mass at least $1 - \eps/6$ on distribution $D_i$.  If yes then we keep the weight of the $i$-th player; otherwise we multiply its weight by a factor of $2$, so that $D_i$ will attract more attention in the future learning process.  Finally, we return a classifier $g$ which takes the plurality vote\footnote{I.e.\ the most frequent value, where ties broken arbitrarily.} of the $T$ classifiers $g^{(0)}, g^{(1)}, \ldots, g^{(T-1)}$ that we have constructed. We note that we make no effort to optimize the constants in the algorithms and their theoretical analysis; while in the experiment section, we will tune the constants for better empirical performance.

The following lemma shows that \utest\ returns, with high probability, the desired set of players where $g$ is an accurate hypothesis for its own distribution. We say a call to \utest\ \emph{successful} if its returning set has the properties described in \pref{lem:utest}. The omitted proofs in this section can be found in \pref{app:basic}.

\begin{algorithm}[H]  
\caption{{\BMW}}
\label{alg:main-1}
\begin{algorithmic}[1]  
\State Let the initial weight $w_i^{(0)} \leftarrow 1$ for each player $i \in \{1, 2, \dots, k\}$.
\State Let $T \leftarrow 10 \ln k$.   \label{line:alg-main-1-1}
\For {$t \leftarrow 0 \textbf{~to~} T - 1$}
	\State Let $p^{(t)}(i) \leftarrow \frac{w_i^{(t)}}{\sum_{i=1}^{k} w_i^{(t)}}$ for each $i \in \{1, 2, \dots, k\}$ so that $p^{(t)}(\cdot)$ defines a probability distribution.
	\State Let $D^{(t)} \leftarrow \sum_{i = 1}^{K} p^{(t)}(i) D_i$.
	\State Let  $S^{(t)}$ be a set of $\bbS_{\frac{\eps}{120}, \frac{\delta}{4(t+1)^2}}$ samples from $D^{(t)}$.  Let $g^{(t)} \leftarrow \calL_{\frac{\eps}{120}, \frac{\delta}{4(t+1)^2}, \calF}(S^{(t)})$.  \label{line:alg-main-1-2}
	\State Let $Z^{(t)} \leftarrow \utest(g^{(t)}, k, t, \eps, \delta)$.
	\For {\textbf{each} $i \in \{1, 2, \dots, k\}$}
		\If {$i \in Z^{(t)}$}
			\State $w_i^{(t+1)} \leftarrow  w_i^{(t)}$
		\Else
			\State $w_i^{(t+1)} \leftarrow 2 \cdot w_i^{(t)}$.
		\EndIf
	\EndFor
\EndFor
\State \textbf{return} $g = \mathrm{Plurality}(g^{(0)},  \dots, g^{(T-1)})$.
\end{algorithmic}
\end{algorithm}

\vspace{-3ex}
\begin{algorithm}[H]  
\caption{Accuracy Test ($\utest(g, k, t, \eps, \delta)$)}
\label{alg:test}
\begin{algorithmic}[1]  
\For {\textbf{each} $i \in \{1, 2, \dots, k\}$}
	Let $T_i$ be a set of $\frac{432}{\eps} \ln \left(\frac{k \cdot 4(t+1)^2}{\delta}\right)$ samples from $D_i$.  \label{line:alg-test-1}
\EndFor
\State \textbf{return} $\{i\ |\ \err_{T_i} (g) \leq \frac{\eps}{6}\}$.
\end{algorithmic}
\end{algorithm}

\end{multicols}

\begin{lemma}

\label{lem:utest}
With probability at least $1 - \frac{\delta}{ 4(t+1)^2}$, $\utest(g, k , t, \eps, \delta)$ returns a set of players that includes 1) each $i$ such that $\err_{D_i}(g) \leq \frac{\eps}{12}$, 2) none of the $i$ such that $\err_{D_i}(g) > \frac{\eps}{4}$. 
\end{lemma}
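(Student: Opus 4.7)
The plan is to prove the lemma by analyzing each player independently via multiplicative Chernoff bounds and then taking a union bound. For each player $i$, let $X_i$ denote the number of samples in $T_i$ on which $g$ errs; this is a sum of $|T_i|$ i.i.d.\ Bernoulli variables with mean $p_i := \err_{D_i}(g)$, so $\E[X_i] = p_i |T_i|$. The set returned by \utest\ contains $i$ iff $X_i \leq \frac{\eps}{6} |T_i|$. The goal is then to show that for each $i$ the test makes the "wrong call" with probability at most roughly $\delta/(4k(t+1)^2)$, after which a union bound over $k$ players yields the claimed overall failure probability $\delta/(4(t+1)^2)$.

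For the first inclusion property, suppose $p_i \leq \eps/12$. I want to bound $\Pr[X_i > \tfrac{\eps}{6}|T_i|]$. Writing $\tfrac{\eps}{6}|T_i| = (1+\gamma)p_i |T_i|$ with $\gamma = \tfrac{\eps}{6 p_i} - 1 \ge 1$, the large-$\gamma$ form of the Chernoff bound gives $\Pr[X_i > (1+\gamma)\mu] \le \exp(-\gamma\mu/3)$, and since $\gamma\mu = (\tfrac{\eps}{6} - p_i)|T_i| \ge \tfrac{\eps}{12}|T_i|$, I obtain a bound of $\exp(-\eps|T_i|/36)$. Plugging $|T_i| = \tfrac{432}{\eps}\ln\!\bigl(\tfrac{4k(t+1)^2}{\delta}\bigr)$ turns this into $\bigl(\tfrac{4k(t+1)^2}{\delta}\bigr)^{-12}$.

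For the second (exclusion) property, suppose $p_i > \eps/4$. Now I want to bound $\Pr[X_i \leq \tfrac{\eps}{6}|T_i|]$. Writing $\tfrac{\eps}{6}|T_i| = (1-\gamma)p_i |T_i|$ with $\gamma = 1 - \tfrac{\eps}{6p_i} > 1/3$, the lower-tail multiplicative Chernoff bound $\Pr[X_i \le (1-\gamma)\mu] \le \exp(-\gamma^2\mu/2)$ combined with $\mu > \tfrac{\eps}{4}|T_i|$ gives $\exp(-\eps|T_i|/72) = \bigl(\tfrac{4k(t+1)^2}{\delta}\bigr)^{-6}$. This looser of the two per-player bounds is the one that drives the constant $432$ in $|T_i|$, so the key calibration is making sure the lower-tail case survives; the constants in the algorithm were chosen precisely so both cases clear this threshold.

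Taking a union bound over all $k$ players and the two failure modes, the total failure probability is at most $2k \bigl(\tfrac{4k(t+1)^2}{\delta}\bigr)^{-6} \leq \tfrac{\delta}{4(t+1)^2}$, which is a straightforward verification since $\delta \le 1$. I don't anticipate a genuine obstacle here; the only subtlety is picking the right form of Chernoff (multiplicative rather than Hoeffding, since Hoeffding's additive deviation $\eps/12$ would require $\Omega(1/\eps^2)$ samples) and tracking the constants carefully.
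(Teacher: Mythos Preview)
Your proposal is correct and follows essentially the same approach as the paper: per-player multiplicative Chernoff bounds followed by a union bound over the $k$ players. The only cosmetic difference is that you invoke the large-deviation form $\exp(-\gamma\mu/3)$ for the upper tail, which yields tighter per-player failure probabilities than the paper's computation, but this is immaterial to the lemma's statement.
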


Given a function $g$ and a distribution $D$, we say that $g$ is a \emph{good candidate} for $D$ if $\err_D(g) \leq \frac{\eps}{4}$. The following lemma shows that if we have a set of functions where most of them are good candidates for $D$, then the plurality vote of these functions also has good accuracy for $D$.

\begin{lemma}\label{lem:maj}
Let $g_1, g_2, \dots, g_m$ be a set of functions such that more than $70\%$ of them are good candidates for $D$. Let $g = \mathrm{Plurality}(g_1, g_2, \dots, g_m)$, we have that $\err_D(g) \leq \eps$. 
\end{lemma}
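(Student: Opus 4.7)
The plan is to show that whenever the plurality vote $g$ disagrees with $f^*$ on some point $x$, a substantial fraction of \emph{good} candidates must have erred at $x$, and then use linearity of expectation with the bound $\err_D(g_i)\le \eps/4$ for good candidates to control $\err_D(g)$.

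First I would note a purely combinatorial fact about plurality: for any $x$, if $g(x)\ne f^*(x)$ then the plurality value $v=g(x)$ receives at least as many votes as $f^*(x)$ does, which forces
\[
\bigl|\{i : g_i(x) = f^*(x)\}\bigr| \leq m/2,
\]
since the remaining $\geq m/2$ votes go to values $\ne f^*(x)$. Hence at least $m/2$ of the $g_i$'s disagree with $f^*$ at $x$.

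Next, let $G\subseteq\{1,\dots,m\}$ be the set of good candidates (so $|G|>0.7m$ and $|\bar G|<0.3m$). For any $x$ with $g(x)\ne f^*(x)$, the disagreements from $\bar G$ contribute at most $|\bar G|$, so the number of \emph{good} $g_i$'s that err at $x$ is at least $m/2 - |\bar G| = |G|-m/2$. Writing $B=\{x : g(x)\ne f^*(x)\}$, this yields the pointwise bound
\[
\sum_{i\in G} \mathbf{1}[g_i(x)\ne f^*(x)] \;\geq\; \bigl(|G|-m/2\bigr)\cdot \mathbf{1}[x\in B].
\]
Taking $\E_{x\sim D}$ on both sides gives $\sum_{i\in G}\err_D(g_i)\ \geq\ (|G|-m/2)\,\err_D(g)$, while the good-candidate assumption yields $\sum_{i\in G}\err_D(g_i)\le |G|\cdot \eps/4$.

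Combining the two inequalities,
\[
\err_D(g) \;\leq\; \frac{|G|\cdot \eps/4}{|G|-m/2}.
\]
The right-hand side is decreasing in $|G|$ on $(m/2,m]$, so using $|G|>0.7m$ gives $\err_D(g)< \tfrac{0.7m\cdot \eps/4}{0.2m} = 0.875\,\eps<\eps$. The only mildly delicate step is the plurality observation at the start; once that is in place, the rest is bookkeeping with linearity of expectation, and I do not anticipate real difficulty.
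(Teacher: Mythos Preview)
Your proof is correct and follows essentially the same approach as the paper's: both hinge on the observation that whenever the plurality $g$ errs at $x$, at least $m/2$ of the $g_i$'s err there, and then combine this with linearity of expectation and the $\eps/4$ bound for good candidates. The only cosmetic difference is that the paper argues by contradiction and keeps all $m$ terms in the sum (bounding the bad candidates' contribution by $\err_D(g)$), whereas you argue directly and restrict the sum to $G$ up front; the resulting inequality $\err_D(g)\le 0.875\,\eps$ is identical.
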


We let the $\calE$ be the event that every call of the learner $\calL$ and $\utest$ is successful. It is straightforward to see that 
\begin{align}\label{eq:calEprob}
\Pr[\calE] \ge 1- \sum_{t=0}^{+\infty}  \frac{\delta}{4 (t+1)^2} \cdot 2 = 1 - \frac{\delta \cdot \pi^2}{24} > 1 - \delta .
\end{align}

Now we are ready to prove the main theorem for \pref{alg:main-1}.

\begin{theorem}\label{thm:main-1}
\pref{alg:main-1} has the following properties.
\begin{enumerate}
\item With probability at least $1 - \delta$, it returns a function $g$ such that $\err_{D_i}(g) \leq \eps$ for all $i \in \{1, 2, \dots, k\}$.
\item Its sample complexity is $\displaystyle{O\left(\frac{\ln k}{\eps}(d + k \ln \delta^{-1} + k \ln k) \right)}$.
\end{enumerate}
\end{theorem}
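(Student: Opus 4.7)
The plan is to condition on the high-probability event $\calE$ (which holds with probability $\geq 1-\delta$ by \eqref{eq:calEprob}) and then reduce the correctness claim to a standard multiplicative weights potential argument. By \pref{lem:maj}, it suffices to show that for every fixed player $i$, at least $70\%$ of the classifiers $g^{(0)},\dots,g^{(T-1)}$ are good candidates for $D_i$, i.e.\ have $\err_{D_i}(g^{(t)}) \le \eps/4$. By \pref{lem:utest} (under $\calE$), whenever $g^{(t)}$ is not a good candidate for $D_i$ we have $\err_{D_i}(g^{(t)}) > \eps/4$, which forces $i \notin Z^{(t)}$, so player $i$'s weight doubles at that round. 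Thus I just need to bound the number of times any single player's weight can double.

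For the multiplicative weights bookkeeping, define $W^{(t)} = \sum_i w_i^{(t)}$. Under $\calE$, the learner's guarantee gives $\err_{D^{(t)}}(g^{(t)}) \le \eps/120$, hence $\sum_i p^{(t)}(i)\, \err_{D_i}(g^{(t)}) \le \eps/120$. By Markov,
\[
  \sum_{i\notin Z^{(t)}} p^{(t)}(i) \;\le\; \Pr_{i\sim p^{(t)}}\!\left[\err_{D_i}(g^{(t)}) > \eps/12\right] \;\le\; \frac{\eps/120}{\eps/12} \;=\; \tfrac{1}{10},
\]
where the first inequality uses the other direction of \pref{lem:utest} (players with $\err_{D_i}(g^{(t)}) \le \eps/12$ are included in $Z^{(t)}$). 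Therefore $W^{(t+1)} = W^{(t)}\bigl(1 + \sum_{i\notin Z^{(t)}} p^{(t)}(i)\bigr) \le W^{(t)} e^{1/10}$, giving $W^{(T)} \le k\cdot e^{T/10} = k^2$ for $T = 10\ln k$. If player $i$'s weight was doubled $m_i$ times then $2^{m_i} \le W^{(T)} \le k^2$, so $m_i \le 2\log_2 k = \frac{2\ln k}{\ln 2} < 0.3\,T$. Hence at least $70\%$ of the rounds are good for $i$, and \pref{lem:maj} yields $\err_{D_i}(g)\le\eps$ simultaneously for all $i$.

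For part~2, I sum the samples across the $T = O(\ln k)$ rounds. The training step in line~\ref{line:alg-main-1-2} draws $\bbS_{\eps/120,\,\delta/(4(t+1)^2)} = O\!\left(\tfrac{1}{\eps}\bigl(d + \ln((t+1)/\delta)\bigr)\right)$ samples per round, contributing $O\!\left(\tfrac{\ln k}{\eps}(d + \ln(\ln k/\delta))\right)$ overall. The \utest\ step (line~\ref{line:alg-test-1}) draws $O\!\left(\tfrac{k}{\eps}\ln\!\tfrac{k(t+1)^2}{\delta}\right)$ samples per round, contributing $O\!\left(\tfrac{k\ln k}{\eps}(\ln k + \ln\delta^{-1})\right)$ overall. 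Adding these gives the stated bound $O\!\left(\tfrac{\ln k}{\eps}(d + k\ln\delta^{-1} + k\ln k)\right)$.

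The main obstacle is calibrating the constants so that the three accuracy thresholds ($\eps/120$ for training, $\eps/12$ and $\eps/4$ for the test, $\eps/4$ for a ``good candidate'', and $\eps$ for the final guarantee) fit together with the MW step size. The key numerical inequality is $2/\ln 2 < 3$, which is what forces the choice $T = 10\ln k$ and the $1/10$ fraction of bad mass per round; any weakening of \pref{lem:utest}'s thresholds or of the learner's target error would break the $0.3\,T$ budget required by \pref{lem:maj}.
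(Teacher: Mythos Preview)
Your proposal is correct and follows essentially the same multiplicative-weights potential argument as the paper: condition on $\calE$, use Markov on $\err_{D^{(t)}}(g^{(t)})\le \eps/120$ together with \pref{lem:utest} to get that at most a $1/10$ fraction of the weight doubles each round, bound the total weight after $T$ rounds, and then compare against the lower bound $2^{m_i}$ on any single player's weight to show fewer than $0.3T$ bad rounds per player. The only cosmetic differences are that the paper uses the factor $1.1$ where you use $e^{1/10}$, phrases the last step as a contradiction rather than a direct bound on $m_i$, and omits the explicit sample-complexity calculation you carry out.
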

\begin{proof}
While the sample complexity is easy to verify, we focus on the proof of the first property. In particular, we show that when $\calE$ happens (which is with probability at least $1 - \delta$ by \eqref{eq:calEprob}), we have $\err_{D_i}(g) \leq \eps$ for all $i \in \{1, 2, \dots, k\}$.

For now till the end of the proof, we assume that $\calE$ happens.

For each round $t$, we have that 
$\frac{\eps}{120} \geq \err_{D^{(t)}}(g^{(t)}) = \E_{i \sim p^{(t)}(\cdot)} [\err_{D_i} (g^{(t)})]$ .
Therefore, by Markov inequality, we have that
$\Pr_{i \sim p^{(t)}(\cdot)} \left[\err_{D_i} (g^{(t)}) > \frac{\eps}{12}\right] \leq .1$ .
In other words, 
\begin{align}\label{eq:thm:main-1:1}
.1 \geq \sum_{i : \err_{D_i} (g^{(t)}) > \frac{\eps}{12}} p^{(t)}(i) = \frac{1}{\sum_{i=1}^{k} w_i^{(t)}} \sum_{i : \err_{D_i} (g^{(t)}) > \frac{\eps}{12}} w_i^{(t)} . 
\end{align}
Now consider the total weight $\sum_{i=1}^{k} w_i^{(t+1)}$, we have
\begin{align}\label{eq:thm:main-1:2}
\sum_{i=1}^{k} w_i^{(t+1)} = \sum_{i=1}^{k} w_i^{(t)} +  \sum_{i \not\in Z^{(t)} } w_i^{(t+1)} .
\end{align}
By \pref{lem:utest} and $\calE$, we have that 
\begin{align}\label{eq:thm:main-1:3}
 \sum_{i \not\in Z^{(t)} } w_i^{(t+1)} \leq \sum_{i : \err_{D_i}(g^{(t)}) > \frac{\eps}{12} } w_i^{(t+1)} .
\end{align}
Combining \eqref{eq:thm:main-1:1}, \eqref{eq:thm:main-1:2}, and \eqref{eq:thm:main-1:3}, we have
$\sum_{i=1}^{k} w_i^{(t+1)} \leq 1.1 \sum_{i=1}^{k} w_i^{(t)}$.
Since $\sum_{i=1}^{k} w_i^{(0)} = k$, we have the following inequality holds for every $t = 0, 1, 2, \dots$ :
$ \sum_{i=1}^{k} w_i^{(t)} \leq 1.1^t \cdot k$ .

Now let us focus on an arbitrary player $i$. We will show that for at least $70\%$ of the rounds $t$, we have $\err_{D_i} (g^{(t)}) \leq \frac{\eps}{4}$, and this will conclude the proof of this theorem thanks to \pref{lem:maj}. 

Suppose the contrary: for more than $30\%$ of the rounds, we have $\err_{D_i} (g^{(t)}) > \frac{\eps}{4}$. At each of such round $t$, we have $i \not\in Z^{(t)}$ because of \pref{lem:utest} and $\calE$, and therefore $w_i^{(t+1)} = 2\cdot w_i^{(t)}$. Therefore, we have  $w_i^{(T)} \geq 2^{.3 T}$. Together with \eqref{eq:thm:main-1:3}, we have
$2^{.3T} \leq w_i^{T} \leq \sum_{i=1}^{k} w_i^{(T)} \leq 1.1^T \cdot k$,
which is a contradiction for $T = 10 \ln k$.
\end{proof}

\section{The Quest for Optimality via Robust Multiplicative Weights}
\label{sec:improve}

In this section we improve the result in Theorem~\ref{thm:main-1} to get an optimal algorithm when $k$ is polynomially bounded by $d$ (see Theorem~\ref{thm:main-2}; the optimality will be shown in Section~\ref{sec:lb}).  In fact, our improved algorithm (\pref{alg:main-2} using \pref{alg:ntest} as a subroutine), is almost the same as \pref{alg:main-1} (using \pref{alg:test} as a subroutine). We highlight the differences as follows.

\begin{enumerate}
\item The total number of iterations at \pref{line:alg-main-1-1} of \pref{alg:main-1} is changed to $\tT = 2000 \ln (k/\delta)$.

\item The failure probability for the single-distribution learning algorithm $\calL$ at \pref{line:alg-main-1-2} of \pref{alg:main-1} is increased to a constant ${1}/{100}$.

\item The number of times that each distribution is sampled at \pref{line:alg-test-1} of \pref{alg:test} is reduced to $\frac{432}{\eps} \ln(100)$.
\end{enumerate}

Although these changes seem minor, it requires substantial technical efforts to establish \pref{thm:main-2}. We describe the challenge and sketch our solution as follows.

While the 2nd and 3rd items lead to the key reduction of the sample complexity, they make it impossible to use the union bound and claim that with high probability ``every call of $\calL$ and \utest~is successful'' (see Inequality \pref{eq:calEprob} in the analysis for \pref{alg:main-1}).

To address this problem, we will make our multiplicative weight analysis robust against occasionally failed rounds so that it works  when ``most calls of $\calL$ and \ntest~are successful''.

\begin{multicols}{2}

In more details, we will first work on the total weights $W^{(t)} = \sum_{i=1}^{k} w_i^{(t)}$ at the $t$-th round, and show that conditioned on the $t$-th round, $\E[W^{(t+1)}]$ is upper bounded by $1.13 W^{(t)}$ (where in contrast we had a stronger and deterministic statement $\sum_{i=1}^{k} w_i^{(t+1)} \leq 1.1 \sum_{i=1}^{k} w_i^{(t)}$ in the analysis for the basic algorithm). Using Jensen's inequality we will be able to derive that $\E[\ln W^{(t+1)}]$ is upper bounded by $(\ln 1.13 + \ln W^{(t)})$. Then, using Azuma's inequality for supermartingale random variables, we will show that with high probability, $\ln W^{(\tT)} \leq \tT (\ln 1.18) + \ln W^{(0)}$, i.e.\ $W^{(\tT)} \leq 1.18^{\tT} \cdot k$, which corresponds to $ \sum_{i=1}^{k} w_i^{(t)} \leq 1.1^t \cdot k$ in the basic proof. On the other hand, recall that in the basic proof we had to show that if for more than 30\% of the rounds, the $g^{(t)}$ function is not a good candidate for a player distribution $D_i$, then we have $w_i^{(T)} \geq 2^{.3 T}$. In the analysis for the improved algorithm, because the \ntest~procedure fails with much higher probability, we need to use concentration inequalities and derive a slightly weaker statement ($w_i^{(\tT)} \geq 2^{.25 \tT}$). Finally, we will put everything together using the same proof via contradiction argument, and prove the following theorem.

\begin{algorithm}[H]  
\caption{{\IMW}}
\label{alg:main-2}
\begin{algorithmic}[1]  
\State Let the initial weight $w_i^{(0)} \leftarrow 1$ for each player $i \in \{1, 2, 3, \dots, k\}$.
\State Let $\tT \leftarrow 2000 \ln (k/\delta)$. \label{line:alg-main-2-2}
\For {$t \leftarrow 0 \textbf{~to~} \tT - 1$}
	\State Let $p^{(t)}(i) \leftarrow \frac{w_i^{(t)}}{\sum_{i=1}^{k} w_i^{(t)}}$ for each $i \in \{1, 2, 3, \dots, k\}$ so that $p^{(t)}(\cdot)$ defines a probability distribution.
	\State Let $D^{(t)} \leftarrow \sum_{i = 1}^{K} p^{(t)}(i) D_i$.
	\State Let  $S^{(t)}$ be a set of $\bbS_{\frac{\eps}{120}, \frac{1}{100}}$ samples from $D^{(t)}$.  Let $g^{(t)} \leftarrow \calL_{\frac{\eps}{120}, \frac{1}{100}, \calF}(S^{(t)})$. \label{line:alg-main-2-6}
	\State Let $Z^{(t)} \leftarrow \ntest(k, g^{(t)}, \eps, \delta)$.
	\For {\textbf{each} $i \in \{1, 2, 3, \dots, k\}$}
		\If {$i \in Z^{(t)}$}
			\State $w_i^{(t+1)} \leftarrow  w_i^{(t)}$
		\Else
			\State $w_i^{(t+1)} \leftarrow 2 \cdot w_i^{(t)}$.
		\EndIf
	\EndFor
\EndFor
\State \textbf{return}  $g = \mathrm{Plurality} (g^{(0)}, \dots, g^{(\tT-1)})$.
\end{algorithmic}
\end{algorithm}
\vspace{-3ex}
\begin{algorithm}[H]  
\caption{Weak Accuracy Test ($\ntest(g, k, \eps, \delta)$)}
\label{alg:ntest}
\begin{algorithmic}[1]  
\For {\textbf{each} $i \in \{1, 2, 3, \dots, k\}$}
	Let $T_i$ be a set of $\frac{432}{\eps} \ln \left(100\right)$ samples from $D_i$.
\EndFor
\State \textbf{return} $\{i\ |\ \err_{T_i} (g) \leq \frac{\eps}{6}\}$.
\end{algorithmic}
\end{algorithm}
\end{multicols}

\begin{theorem}\label{thm:main-2}
\pref{alg:main-2} has the following properties.
\begin{enumerate}
\item With probability at least $1 - \delta$, it returns a function $g$ such that $\err_{D_i}(g) \leq \eps$ for all $i \in \{1, 2, \dots, k\}$.
\item Its sample complexity is $\displaystyle{O\left(\frac{(\ln k + \ln \delta^{-1})(d + k)}{\eps}\right)}$.
\end{enumerate}
\end{theorem}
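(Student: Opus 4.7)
My plan is to mirror the two-step structure of the proof of \pref{thm:main-1} — an upper bound on the total weight $W^{(t)}=\sum_{i=1}^{k} w_i^{(t)}$ together with a lower bound on the weight of any ``chronically unhappy'' player — and then conclude via contradiction that, for every player $i$, $g^{(t)}$ is a good candidate for $D_i$ in at least $70\%$ of the rounds, after which \pref{lem:maj} yields $\err_{D_i}(g)\leq\eps$. The sample complexity part is immediate: round $t$ draws $\bbS_{\eps/120,1/100}=O(d/\eps)$ learning samples plus $k\cdot\frac{432}{\eps}\ln 100=O(k/\eps)$ test samples, and there are $\tT=O(\ln k+\ln\delta^{-1})$ rounds.

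The essential new difficulty is that $\calL$ and the $k$ coordinates of $\ntest$ each fail with constant probability, so the clean event $\calE$ used in the basic proof is no longer available. To replace it, I would argue in expectation. Fix round $t$ and condition on $w^{(t)}$. With probability $\geq 99/100$ the learner succeeds and $\err_{D^{(t)}}(g^{(t)})\leq\eps/120$, so by Markov the set $B_t=\{i:\err_{D_i}(g^{(t)})>\eps/12\}$ carries $p^{(t)}$-mass at most $1/10$; in the complementary event I use only the trivial bound $W^{(t+1)}\leq 2W^{(t)}$. For every $i\notin B_t$, a multiplicative Chernoff bound on the $\frac{432}{\eps}\ln 100$ test samples shows $\Pr[i\notin Z^{(t)}\mid g^{(t)}]$ is a tiny absolute constant. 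Combining these contributions yields $\E[W^{(t+1)}\mid w^{(t)}]\leq 1.13\,W^{(t)}$, so by Jensen's inequality
\[
\E[\ln W^{(t+1)}-\ln W^{(t)}\mid w^{(t)}]\leq \ln 1.13.
\]
Because every increment $\ln W^{(t+1)}-\ln W^{(t)}$ lies in $[0,\ln 2]$, the supermartingale form of Azuma--Hoeffding applied to $\ln W^{(t)}-t\ln 1.13$ implies that with probability at least $1-\delta/2$,
\[
\ln W^{(\tT)}\leq \tT\ln 1.18 + \ln k,
\]
for $\tT$ as large as in \pref{line:alg-main-2-2} (the inflation from $1.13$ to $1.18$ absorbs the $O(\sqrt{\tT\ln(1/\delta)})$ Azuma slack).

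For the individual lower bound, fix a player $i$ and suppose toward contradiction that $|B'_i|>0.3\,\tT$, where $B'_i=\{t:\err_{D_i}(g^{(t)})>\eps/4\}$. For every round $t$, the event $\{t\in B'_i\}\cap\{i\in Z^{(t)}\}$ has conditional probability at most $1/100$ by a Chernoff bound on $\ntest$ (if $t\in B'_i$ then the true error exceeds $\eps/4$, so the empirical error is below $\eps/6$ with only a small constant probability). Hence $Y:=|\{t\in B'_i:i\in Z^{(t)}\}|$ is stochastically dominated by $\mathrm{Binomial}(\tT,1/100)$, and Chernoff gives $Y\leq 0.05\,\tT$ except with probability at most $\exp(-\Omega(\tT))\leq \delta/(4k)$. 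On this event $i$ is doubled at least $|B'_i|-Y\geq 0.25\,\tT$ times, so $w_i^{(\tT)}\geq 2^{0.25\tT}$. Combined with $w_i^{(\tT)}\leq W^{(\tT)}\leq 1.18^{\tT}k$ and $2^{0.25}>1.18$, this forces $\tT\leq O(\ln k)$, contradicting $\tT=2000\ln(k/\delta)$. A union bound over the $k$ players and the two high-probability events gives total failure probability at most $\delta$, and \pref{lem:maj} concludes the argument.

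The main obstacle I expect is making the supermartingale step fully rigorous: the filtration must be set up so that $W^{(t)}$ is measurable at time $t$ while $g^{(t)}$ and $Z^{(t)}$ retain their correct conditional distributions, and the narrow numerical gap $0.25\ln 2 - \ln 1.18 \approx 0.008$ means the constants in the per-step drift bound, in the Azuma deviation term, and in the Chernoff bounds on $\ntest$ must all be tracked carefully to be mutually consistent. Everything else is a careful bookkeeping of Chernoff parameters for the weak test and of the learner's failure probability.
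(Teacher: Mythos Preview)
Your proposal is correct and follows essentially the same route as the paper: the paper also bounds $\E[W^{(t+1)}\mid W^{(0)},\dots,W^{(t)}]\le 1.13\,W^{(t)}$ (combining the $\le 1.11$ bound when the learner succeeds with the $2W^{(t)}\cdot\frac{1}{100}$ contribution when it fails), passes to $\ln W$ via Jensen, applies Azuma to the supermartingale $\sum_{z<t}Q^{(z)}-t\ln 1.13$ to get $W^{(\tT)}\le 1.18^{\tT}k$ as event $\calJ_2$, and separately uses Chernoff on the per-player $\ntest$ failure indicators (event $\calJ_1$) to guarantee that any player with $>30\%$ bad rounds is doubled at least $25\%$ of the time, yielding the same contradiction $2^{0.25\tT}\le 1.18^{\tT}k$. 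The only cosmetic difference is that the paper first bounds the total number of $\ntest$ failures $\sum_t\psi_i^{(t)}\le 0.05\tT$ unconditionally and then deduces your bound on $Y$, rather than bounding $Y$ directly; both are valid martingale-Chernoff arguments.
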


Now we prove \pref{thm:main-2}.

Similarly to \pref{lem:utest}, applying \pref{prop:chernoff} (but without the union bound), we have the following lemma for \ntest.
\begin{lemma}
\label{lem:ntest}
For each player $i$, with probability at least $1- \frac{1}{100}$, the following hold, 1) if $\err_{D_i}(g) \leq \frac{\eps}{12}$, then $i \in \ntest(g, k , \eps, \delta)$; 2) if $\err_{D_i}(g) > \frac{\eps}{4}$, then $i \not\in \ntest(g, k ,  \eps, \delta)$.
\end{lemma}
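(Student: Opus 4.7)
The plan is to apply a Chernoff-type concentration bound (\pref{prop:chernoff}) separately to the two conclusions, exploiting the fact that for a single fixed player $i$ the hypotheses $\err_{D_i}(g) \le \eps/12$ and $\err_{D_i}(g) > \eps/4$ are mutually exclusive. Thus no union bound over the two cases is needed, and in particular there is no union bound over players either (which is precisely the saving relative to \pref{lem:utest} that \pref{alg:main-2} must exploit).

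Fix a player $i$, let $m = \frac{432}{\eps}\ln(100)$ denote the size of $T_i$, and write $p = \err_{D_i}(g)$. Since $T_i$ consists of $m$ i.i.d.\ samples from $D_i$, the quantity $m\cdot\err_{T_i}(g)$ is a sum of $m$ independent $\{0,1\}$-valued indicators, each with mean $p$, so it has expectation $mp$. For conclusion~1, suppose $p\le \eps/12$. I would observe that the target threshold $m\eps/6$ is at least $2mp$, so a violation requires the empirical count to exceed twice its mean. Applying the multiplicative Chernoff upper tail with slack $\delta=1$ bounds this by $\exp(-m\eps/36)$, which equals $\exp(-12\ln 100) = 100^{-12}$ — far smaller than $1/100$.

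For conclusion~2, suppose $p>\eps/4$. The target threshold $m\eps/6$ is at most $\tfrac{2}{3}\cdot m\eps/4 < \tfrac{2}{3}mp$, so having $\err_{T_i}(g)\le \eps/6$ forces the empirical count to fall by a $\tfrac{1}{3}$-multiplicative factor below its mean. Applying the Chernoff lower tail with $\delta = 1/3$ gives probability at most $\exp\bigl(-\tfrac{1}{2}\cdot\tfrac{1}{9}\cdot m\eps/4\bigr) = \exp(-m\eps/72) = \exp(-6\ln 100) = 100^{-6}$, again comfortably below $1/100$. Combining the two cases proves the lemma.

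There is no real obstacle in this proof: it is just a direct per-player Chernoff calculation, contrasting with \pref{lem:utest} which paid an extra $\ln(k/\delta)$ factor precisely in order to support a union bound over all $k$ players and all rounds. The only thing to keep track of is that the algorithm's constants ($\eps/12$, $\eps/6$, $\eps/4$) leave enough two-sided multiplicative slack for Chernoff to kick in with the modest sample budget $\frac{432}{\eps}\ln(100)$; everything downstream in the analysis of \pref{alg:main-2} is then designed to tolerate the resulting $\tfrac{1}{100}$ per-player/per-round failure probability via supermartingale arguments.
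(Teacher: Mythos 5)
Your proof is correct and follows essentially the same route as the paper, which simply invokes the two per-player Chernoff tail bounds from the proof of Lemma~\ref{lem:utest} with the smaller sample size $\frac{432}{\eps}\ln(100)$ and drops the union bound over players and rounds. The only point worth making explicit in case~1 is the standard stochastic-domination step that lets you replace the true mean $p \le \eps/12$ by $\eps/12$ in the exponent (the raw upper-tail bound $\exp(-mp/3)$ weakens as $p$ decreases, so plugging in $p\le\eps/12$ does not immediately yield $\exp(-m\eps/36)$); the paper's proof of Lemma~\ref{lem:utest} elides this in exactly the same way.
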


Let the indicator variable $\psi_i^{(t)} = 1$ if the desired event described in \pref{lem:ntest} for $i$ and time $t$ does not happen; and let $\psi_i^{(t)} = 0$ otherwise. By \pref{lem:ntest}, we have $\E[\psi_i^{(t)}] \leq \frac{1}{100}$. By \pref{prop:chernoff}, for each player $i$, we have
$\Pr\left[\sum_{t=0}^{\tT-1} \psi_i^{(t)}  > .05 \tT\right] \leq \exp\left(- \frac{1}{3} \cdot 4^2 \cdot \frac{\tT}{100}\right) \leq \exp\left(- \frac{5 \tT}{100} \right) \leq \frac{\delta}{k^5}$.

Now let $\calJ_1$ be the event that $\sum_{t=0}^{\tT-1} \psi_i^{(t)}  \leq .05 \tT$ for every $i$. Via a union bound, we have that 
\begin{align} \label{eq:thm:main-2:1}
\Pr[\calJ_1] \geq 1 - \frac{\delta}{k^4} .
\end{align}

Let the indicator variable $\chi^{(t)} = 1$ if the learner $\calL$ fails at time $t$; and let $\chi^{(t)} = 0$ otherwise. We have 
\begin{align}\label{eq:thm:main-2:11}
\E\left[\chi^{(t)} \ | \ \text{time $0$, $1$, \dots, $t-1$} \right] \leq \frac{1}{100} .
\end{align} 


Let $W^{(t)} = \sum_{i=1}^{k}  w_i^{(t)}$ be the total weights at time $t$. For each $t$, similarly to \eqref{eq:thm:main-1:2}, we have
\begin{align} \label{eq:thm:main-2:3}
W^{(t+1)} = W^{(t)} + \sum_{i \not\in Z^{(t)}} w_i^{(t)} .
\end{align}
For each $i$ such that $\err_{D_i}(g^{(t)}) \leq \frac{\eps}{12}$, by \pref{lem:ntest}, we know that $\Pr[i \not\in Z^{(t)}] \leq \frac{1}{100}$. Therefore, if we take the expectation over the randomness of \ntest\ at time $t$, we have, 
\begin{eqnarray}
\E \left[\sum_{i \not\in Z^{(t)}} w_i^{(t)}\right] &\leq& \sum_{i : \err_{D_i}(g^{(t)}) > \frac{\eps}{12} } w_i^{(t)} + \E\left[\sum_{i : \err_{D_i}(g^{(t)}) \leq \frac{\eps}{12} } w_i^{(t)}\right] \nonumber \\ 
&\leq& \sum_{i : \err_{D_i}(g^{(t)}) > \frac{\eps}{12} } w_i^{(t)} + \frac{1}{100} \cdot \sum_{i=1}^{k} w_i^{(t)}.  \label{eq:thm:main-2:4}
\end{eqnarray}

When $\chi^{(t)} = 0$, similarly to the proof of \pref{thm:main-1}, we have  
$\Pr_{i \sim p^{(t)}(\cdot)} \left[\err_{D_i} (g^{(t)}) > \frac{\eps}{12}\right] \leq .1$,
and 
\begin{align}\label{eq:thm:main-2:5}
.1 \geq \sum_{i : \err_{D_i} (g^{(t)}) > \frac{\eps}{12}} p^{(t)}(i) = \frac{1}{\sum_{i=1}^{k} w_i^{(t)}} \sum_{i : \err_{D_i} (g^{(t)}) > \frac{\eps}{12}} w_i^{(t)}.
\end{align}
Combining \eqref{eq:thm:main-2:3}, \eqref{eq:thm:main-2:4}, and \eqref{eq:thm:main-2:5}, we have (when $\chi^{(t)} = 0$)
\begin{align}\label{eq:thm:main-2:10}
\E\left[W^{(t+1)}\ \big|\ \chi^{(t)} = 0 \text{~and~} W^{(0)}, \dots, W^{(t)}\right] \leq 1.11 \cdot W^{(t)} .
\end{align}
Together with \eqref{eq:thm:main-2:11}, we have
$\E\left[W^{(t+1)}\ \big|\ W^{(0)}, \dots, W^{(t)}\right] \leq 1.11 \cdot W^{(t)} 
= \ln \Big(\E\left[W^{(t+1)} \ \big|\ \chi^{(t)} = 0 \text{~and~} W^{(0)}, \dots, W^{(t)}\right]  \cdot \Pr\left[\chi^{(t)} = 0 \ | W^{(0)}, \dots, W^{(t)}\right]  
  + 2 W^{(t)} \cdot \Pr\left[\chi^{(t)} = 1 \ | W^{(0)}, \dots, W^{(t)}\right]\Big)
 \leq (1.11  + 0.02) W^{(t)} = 1.13 W^{(t)}$.

Let $Q^{(t)} = \ln W^{(t+1)} / W^{(t)} $, and by Jensen's inequality, we have 
$\E \left[Q^{(t)} \ \big| \  W^{(0)}, \dots, W^{(t)}\right] \leq \ln \E\left[W^{(t+1)}/ W^{(t)} \ \big| \ W^{(0)}, \dots, W^{(t)}\right]$.
Therefore, we have
$\E \left[Q^{(t)} \ \big| \  Q^{(0)}, \dots, Q^{(t-1)}\right] \nonumber = \E \left[Q^{(t)} \ \big| \  W^{(0)}, \dots, W^{(t)}\right]  
 \leq  \ln \E\left[W^{(t+1)}/ W^{(t)} \ \big| \ W^{(0)}, \dots, W^{(t)}\right] 
 \leq  \ln (1.11 + .02) = \ln 1.13$.

Now let $\tQ^{(t)} = \sum_{z=0}^{t - 1} Q^{(z)} - t \cdot \ln 1.13$ for all $t =0, 1, 2, \dots$. We have that $\{\tQ^{(t)}\}$ is a supermartingale and $|\tQ^{(t+1)} - \tQ^{(t)}| \leq \ln 2$ for all $t =0, 1, 2, \dots$. By \pref{prop:azuma} and noticing that $\ln 1.18 - \ln 1.13 > .04 $, we have
$\Pr\left[\sum_{t=0}^{\tT - 1} Q^{(t)} > (\ln 1.18) \tT\right] \leq  \Pr\left[\tQ^{(\tT)} - \tQ^{(0)} > .04 \tT\right] \leq \exp\left( - \frac{.04^2 \cdot \tT}{2 \cdot (\ln 2)^2}\right) \leq \frac{\delta}{k^2}$.
Let $\calJ_2$ be the event that $W^{(\tT)} \leq 1.18^{\tT} \cdot k \Leftrightarrow \sum_{t=0}^{\tT - 1} Q^{(t)} \leq (\ln 1.18) \tT$, we have that
\begin{align}\label{eq:thm:main-2:6}
\Pr[\calJ_2] \geq 1 - \frac{\delta}{k^2}.
\end{align}

Now let $\calJ = \calJ_1 \cap \calJ_2$, combining \eqref{eq:thm:main-2:1} and \eqref{eq:thm:main-2:6}, for $k \geq 2$, we have
\begin{align}\label{eq:thm:main-2:7}
\Pr[\calJ] \geq 1 - \frac{\delta}{k}.
\end{align}

Now we are ready to prove \pref{thm:main-2} for \pref{alg:main-2}.

\begin{proof}[of \pref{thm:main-2}]
While the sample complexity is easy to verify, we focus on the proof of the first property. In particular, we show that when $\calJ$ happens (which is with probability at least $1 - \delta$ by \eqref{eq:thm:main-2:7}), we have $\err_{D_i}(g) \leq \eps$ for all $i \in \{1, 2, 3, \dots, k\}$.

Let us consider an arbitrary player $i$. We will show that when $\calJ$ happens, for at least $70\%$ the times $t$, we have $\err_{D_i} (g^{(t)}) \leq \frac{\eps}{4}$, and this will conclude the proof of this theorem thanks to \pref{lem:maj}. 

Suppose the contrary: for more than $30\%$ of the times, we have  $\err_{D_i} (g^{(t)}) > \frac{\eps}{4}$. Because of $\calJ_1$, for more than $30\% - 5\% = 25\%$ of the times $t$, we have $i \not\in Z^{(t)}$. Therefore, we have
$w_i^{(\tT)} \geq 2^{.25 \tT}$.
On the other hand, by $\calJ_2$ we have $W^{(\tT)} \leq 1.2^{\tT}$. Therefore, we reach
$2^{.25 \tT} \leq w_i^{(\tT)} \leq W^{(\tT)} \leq 1.18^{\tT} \cdot k$,
which is a contradiction to $\tT = 2000 \ln(k/\delta)$.
\end{proof}

\section{Lower Bound}
\label{sec:lb}

We show the following lower bound result, which matches our upper bound (\pref{thm:main-1}) when $k = (1/\delta)^{\Omega(1)}$ and $k = d^{O(1)}$. 

\begin{theorem}
\label{thm:lb}

In collaborative PAC learning with $k$ players and a hypothesis class of VC-dimension $d$, for any $\eps, \delta \in (0, 0.01)$, there exists a hard input distribution on which any $(\eps, \delta)$-learning algorithm $\calA$ needs $\Omega(\max\{d \ln k, k \ln d\}/\eps)$ samples in expectation, where the expectation is taken over the randomness used in obtaining the samples and the randomness used in drawing the input from the input distribution.
\end{theorem}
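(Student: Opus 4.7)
The plan is to prove $\Omega(d \ln k / \eps)$ and $\Omega(k \ln d / \eps)$ separately, by two different hard instance constructions, and then take the maximum. Both constructions use Yao's minimax principle together with a Fano-style information-theoretic argument: we define a prior over inputs (target hypothesis $f^* \in \calF$ plus the player distributions $D_1, \dots, D_k$), show that each sample drawn by any deterministic algorithm conveys only $O(\eps)$ bits of information about the hidden target under this prior, and deduce that without the stated number of samples the posterior entropy stays too large to $(\eps, \delta)$-learn for $\delta < 0.01$. The choice of prior must also ensure that pairs of plausible targets induce labeled-sample distributions whose per-sample KL divergence is $O(\eps)$, which is what delivers the crucial $1/\eps$ factor.

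For the $\Omega(d \ln k / \eps)$ bound, I would generalize the $k = d$ construction of Blum et al. to arbitrary $k$ by taking $\calX = [d] \times [k]$ and $\calF = \{h_f : f : [d] \to [k]\}$ with $h_f(i, j) = \mathbb{1}[f(i) = j]$, a class that has VC dimension exactly $d$. The target $f$ is drawn uniformly at random. Each player distribution is designed to allocate mass $\Theta(\eps)$ to every row $i$ and then spread that mass uniformly over the $k$ columns within row $i$; under this design, a single sample either does not land in row $i$ (no information about $f(i)$) or lands in a uniformly random column (still small information, because the prior on $f(i)$ is uniform on $[k]$). Summing the $O(\eps / k)$ bits per sample, the chain rule of mutual information gives that $\Omega(d \log k / \eps)$ samples are required to identify $f$ well enough to $\eps$-predict simultaneously on every player.

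For the $\Omega(k \ln d / \eps)$ bound I would use a direct-sum construction: partition $\calX$ into $k$ disjoint regions, one per player, and on each region embed a hypothesis sub-class so that player $\ell$ faces a sub-problem of identifying one of $d$ candidate hypotheses whose labeled-sample distributions pairwise differ on $\Theta(\eps)$ mass with KL divergence $O(\eps)$. Disjointness of supports makes samples from player $\ell$ information-theoretically useless for the sub-problem of player $\ell' \neq \ell$, so the per-player Fano-type $\Omega(\log d / \eps)$ lower bound composes additively into $\Omega(k \log d / \eps)$. The main obstacle lies in the regime $k > d$: a naive product of $k$ VC-dim-$1$ sub-classes would blow the overall VC dimension up to $k > d$. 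I would resolve this by clustering the $k$ players into $d$ groups so that players within one group share the same hidden coordinate of the target, yet sit on disjoint supports with disjoint candidate singletons; the overall VC dimension then remains $d$ while each of the $k$ players individually still needs $\Omega(\log d / \eps)$ samples to discriminate its $d$ candidates on its own support. Verifying this decoupling, and checking that pairwise KL divergences collapse correctly under the prior without reintroducing hidden shared information, is the most delicate step of the argument.
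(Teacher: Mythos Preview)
Your overall plan---two separate hard instances combined by taking the maximum---matches the paper, but both the proof technique and the concrete constructions diverge from what the paper does, and your $\Omega(d\ln k/\eps)$ part has a real gap.

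\textbf{Different technique.} The paper does not use Fano or mutual information at all. For each of the two terms it builds a random-embedding \emph{reduction} from a single-player problem to the $k$-player problem: pick a uniformly random index $\ell\in[k]$, plant the single-player instance at player $\ell$, and simulate the remaining players internally. The reduction shows that a $k$-player $(\eps,\delta)$-learner using $m$ samples yields a single-player $(\eps,O(\delta/k))$-learner (respectively $(\eps,O(\delta/d))$ in the $d\le k$ case) using $O(m/k)$ samples. The logarithmic factors $\ln k$ and $\ln d$ then come from the $\log(1/\delta')$ term in the known single-player lower bounds, not from the cardinality of the hypothesis class.

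\textbf{Gap in your $d\ln k$ construction.} As you describe it, every player carries the \emph{same} distribution (mass $\Theta(\eps)$ per row, uniform across the $k$ columns). That already collapses the collaborative problem to a single-distribution one. More seriously, in the class $\{h_f: f\in[k]^{[d]}\}$ any two hypotheses $h_f,h_{f'}$ disagree on at most $2d$ grid points, each of mass $\Theta(\eps/k)$, so their disagreement mass is $O(d\eps/k)$. To get an $\eps$-packing you need $d\gg k$, but then the per-sample mutual information with $f$ is $\Theta(d\eps)\cdot H(1/k)=\Theta(d\eps\log k/k)$, and Fano yields only $m=\Omega(k/\eps)$, not $\Omega(d\log k/\eps)$. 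If you shrink the row mass to force $O(\eps)$ bits per sample, the disagreement mass drops below $\eps$ and the packing collapses to a single point. There is no choice of row mass that makes both ends of the Fano inequality cooperate here. The paper sidesteps this tension entirely: its instance space is just $\{0,\dots,d-1,\perp\}$ with binary labels, each of the $k$ players owns a private block of $d/k$ points, and the $\ln k$ factor enters through the reduction's $\delta\mapsto\delta/k$ amplification rather than through $|\calF|$.

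\textbf{On your $k\ln d$ part.} Your clustering of $k$ players into $d$ groups is structurally the same as the paper's ``duplicate the $d$-player instance $k/d$ times'' construction. However, your claim that samples from player $\ell$ are information-theoretically useless for player $\ell'$ is false precisely when $\ell$ and $\ell'$ are in the same group: they share the hidden coordinate, so one player's samples directly help the other. The paper's reduction argument handles this by noting that the failure probability amplifies by only $1/d$ (there are $d$ distinct sub-problems) while the expected sample count still divides by $k$ (the random embedding is into one of $k$ slots); it is this asymmetry that produces $k\ln d$ rather than $d\ln d$, and your Fano sketch does not capture it.
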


The proof of \pref{thm:lb} is similar to that for the lower bound result in \cite{BHPQ17}; however, we need to generalize the hard instance provided in \cite{BHPQ17} in two different cases.  We briefly discuss the high level ideas of our generalization here, and leave the full proof to \pref{app:lb} due to space constraints.

The lower bound proof in \cite{BHPQ17} (for $k = d$) performs a reduction from a simple player problem to a $k$-player problem, such that if we can $(\eps, \delta)$-PAC learn the $k$-party problem using $m$ samples in total, then we can $(\eps, 10\delta/(9k))$-PAC learn the single player problem using $O(m/k)$ samples. Now for the case when $d > k$, we need to change the single player problem used in \cite{BHPQ17} whose hypothesis class is of VC-dimension $\Theta(1)$ to one whose hypothesis class is of VC-dimension $\Theta(d/k)$.  For the case when $d \le k$, we essentially duplicate the hard instance for a $d$-player problem $k/d$ times, getting a hard instance for a $k$-player problem, and then perform the random embedding reduction from the single player problem to the $k$-player problem. See \pref{app:lb} for details.

\newcommand{\chensays}[2][]{\textcolor{red} {\textsc{Chen #1:} \emph{#2}}}
\newcommand{\letter}{{\sc Letter}}
\newcommand{\magiceven}{{\sc Magic-Even}}
\newcommand{\magicone}{{\sc Magic-1}}
\newcommand{\magictwo}{{\sc Magic-2}}
\newcommand{\wine}{{\sc Wine}}
\newcommand{\winetwo}{{\sc Wine-2}}
\newcommand{\eye}{{\sc Eye}}

\newcommand{\naive}{{\sc Naive}}
\newcommand{\blum}{{\sc CenLearn}}
\newcommand{\basic}{{\sc Basic}}
\newcommand{\improved}{{\sc MWeights}}

\newcommand{\avg}{\text{avg}}

\section{Experiments}
\label{sec:exp}
We present in this section a set of experimental results which demonstrate the effectiveness of our proposed algorithms.

Our algorithms are based on the assumption that given a hypothesis class, we are able to compute its VC dimension $d$ and access an oracle to compute an $(\eps, \delta)$-classifier with sample complexity $\bbS_{\eps, \delta}$. In practice, however, it is usually computationally difficult to compute the exact VC dimension for a given hypothesis class. Also, the VC dimension usually only proves to be a very loose upper bound for the sample complexity needed for an $(\eps, \delta)$-classifier.

To address these practical difficulties, in our experiment, we treat the VC dimension $d$ as a parameter to control the sample budget. More specifically, we will first choose a concrete model as the oracle; in our implementation, we choose the decision tree. We then set the parameter $\delta = 0.9$ and gradually increase $d$ to determine the sample budget.
For each fixed sample budget (i.e., each fixed $d$), we run the algorithm for $100$ times and test whether the following happens,
\begin{equation}
  \label{eq:goal}
  \widehat{\Pr}[\max_i\err_{D_i}(g) \leq \eps~\text{for all}~ i] \geq 0.9.
\end{equation}
Here $\eps$ is a parameter we choose and $g$ is the classifier returned by the collaborative learning algorithm to be tested. The empirical probability $\widehat{\Pr}[\cdot]$ in \eqref{eq:goal} is calculated over the $100$ runs. We finally report the minimum number of samples consumed by the algorithm to achieve \eqref{eq:goal}.

Note that in our theoretical analysis, we did not try to optimize the constants. Instead, we tune the constants for both \blum\ and \improved\ for better performance. Please find more implementation details in the appendix.

\paragraph{Datasets.}  We will test the collaborative learning algorithms using the following data sets.
\begin{itemize}

\item \magiceven~\cite{BAM04}. This data set is generated to simulate registration of high energy gamma particles in an atmospheric Cherenkov telescope. There are $19,020$ instances and each belongs to one of the two classes (gamma and hadron). There are $11$ attributes in each data point. We randomly partition this data set into $k=10$ subsets (namely, $D_1, \ldots, D_k$).

\item \magicone. The raw data set is the same as we have in \magiceven. Instead of random partitioning, we partition the data set into $D_1$ and $D_2$ based on the two different classes, and make $k-2$ more copies of $D_2$ so that $D_2, D_3, \ldots, D_k$ are identical. In our case we set $k = 10$.

\item \magictwo. This data set differs from \magicone\ in the way of constructing $D_1$ and $D_2$: we partition the original data set into $D_1$ and $D_2$ based on the first dimension of the feature vectors; we then make duplicates for $D_2$. Here we again set $k = 10$.

\item \wine~\cite{PAF09}. This data set contains physicochemical tests for white wine, and the scores of the wine range from $0$ to $10$. There are $4,898$ instances and there are $12$ attributes in the feature vectors. We partition the data set into $D_1, \ldots, D_4$ based on the first two dimensions.

\item \eye. This data set consists of 14 EEG values and a value indicating the eye state. There are $14,980$ instances in this data set. We partition it into $D_1, \ldots, D_4$ based on the first two dimensions. 

\item \letter~\cite{WS91}.
This data set has $20,000$ instances, each in $\mathbb{R}^{16}$. There are $26$ classes, each representing one of $26$ capital letters. We partition this data set into $k=12$ subsets based on the first $4$ dimensions of the feature vectors.
  
\end{itemize}

\paragraph{Tested Algorithms.}
We compare our algorithms with the following two baseline algorithms,
\begin{itemize}
\item \naive. In this algorithm we treat all distributions $D_1, \ldots, D_k$ equally. That is, given a budget $z$, we sample $z$ training samples from $D = \frac{1}{k}\sum_{i=1}^kD_i$. We then train a classifier (decision tree) using those samples.
\item \blum, this is the implementation of the algorithm proposed by Blum et al.~\cite{BHPQ17}.
\end{itemize}

Since our \pref{alg:main-1} and \pref{alg:main-2} are very similar, and \pref{alg:main-2} has better theoretical guarantee, we will only test \pref{alg:main-2}, denoted as \improved, in our experiments.

\paragraph{Experimental Results and Discussion.}

\begin{figure}[t]
     \centering
     \subfloat[][\magiceven]{\includegraphics[width=5cm,height=4cm]{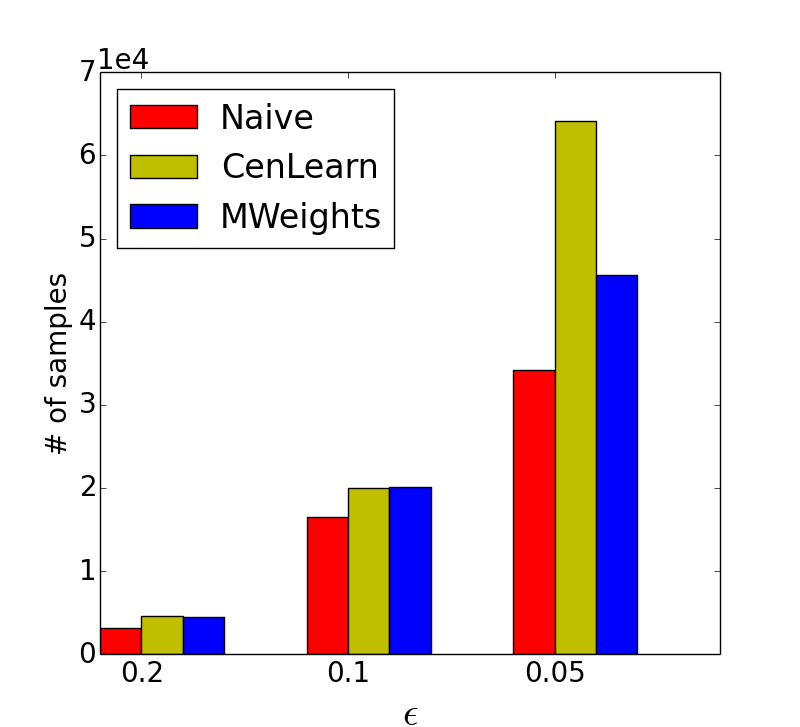}\label{fig:magic-even}}
     \hspace*{-1.5em}
     \subfloat[][\magicone]{\includegraphics[width=5cm,height=4cm]{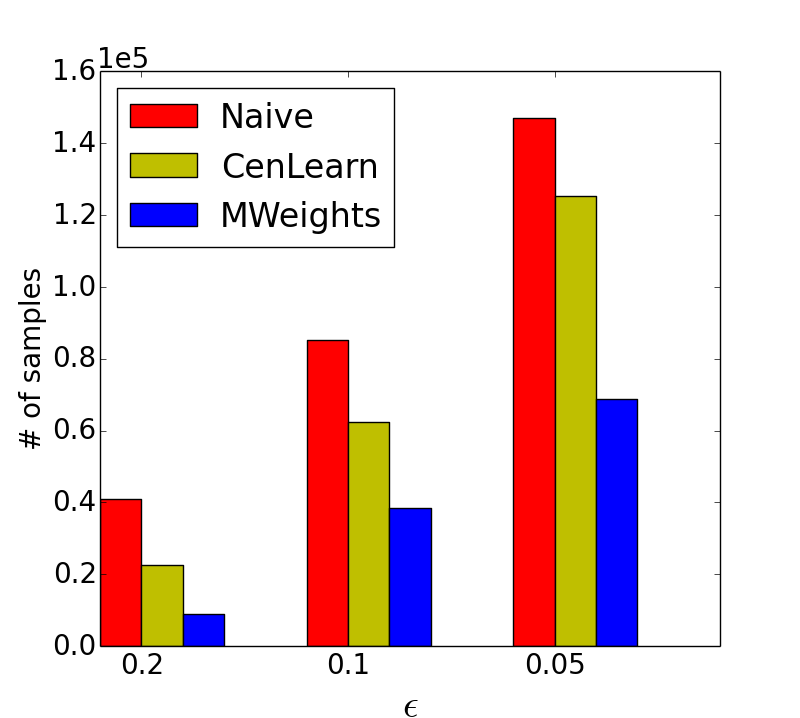}\label{fig:magic-1}}
     \hspace*{-1.5em}
     \subfloat[][\magictwo]{\includegraphics[width=5cm,height=4cm]{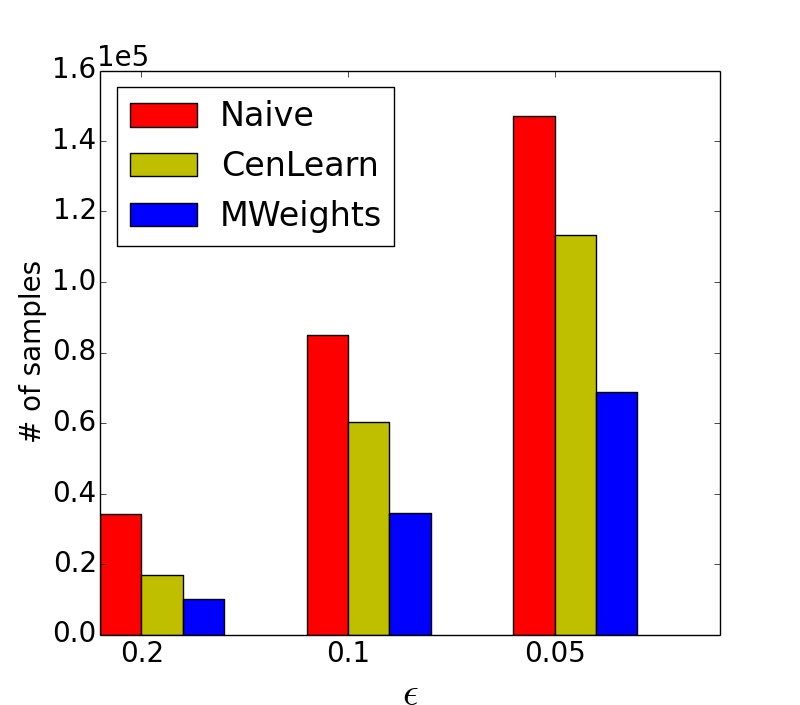}\label{fig:magic-2}}
     \\
     \subfloat[][\wine]{\includegraphics[width=5cm,height=4cm]{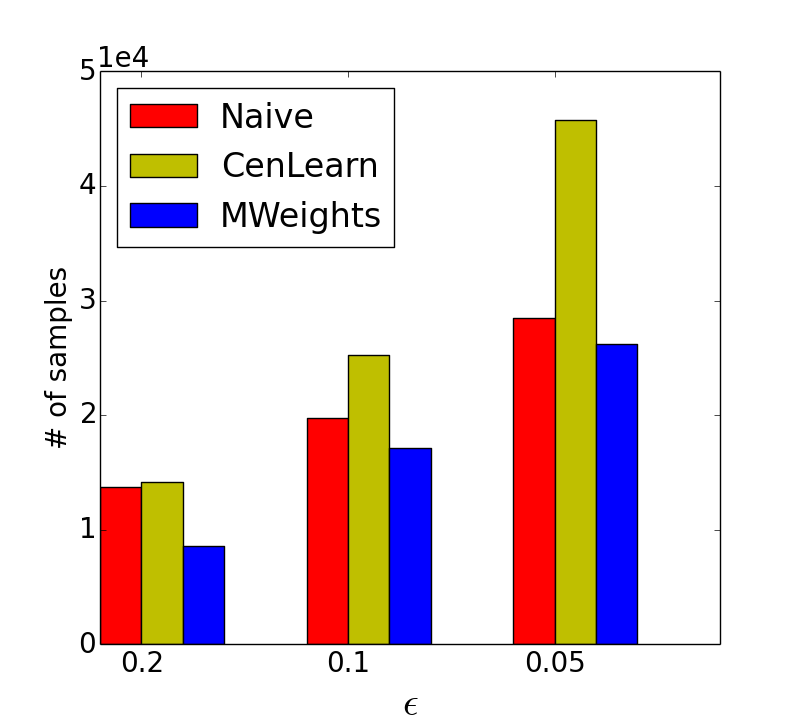}\label{fig:wine}}
     \hspace*{-1.5em}
     \subfloat[][\eye]{\includegraphics[width=5cm,height=4cm]{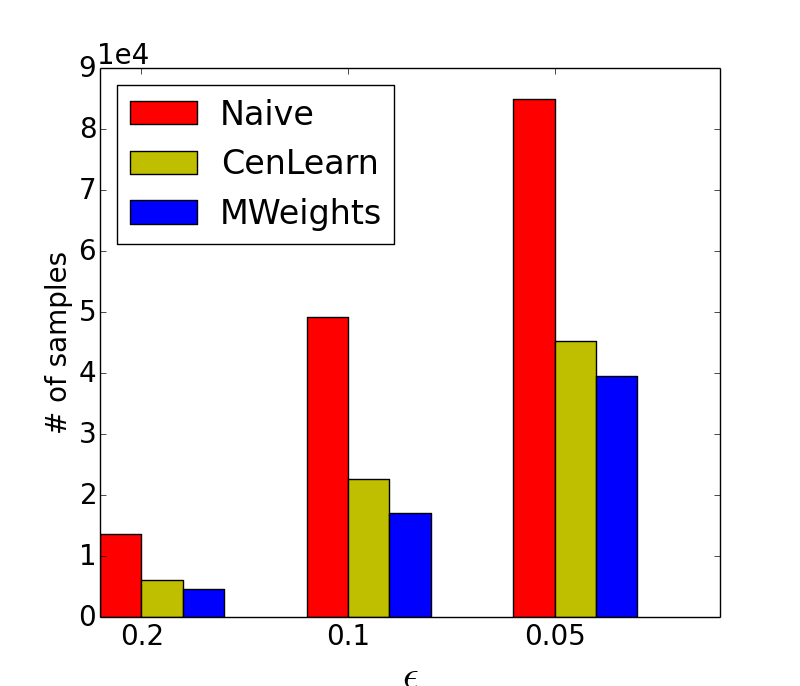}\label{fig:eye}}
     \hspace*{-1.5em}
     \subfloat[][\letter]{\includegraphics[width=5cm,height=4cm]{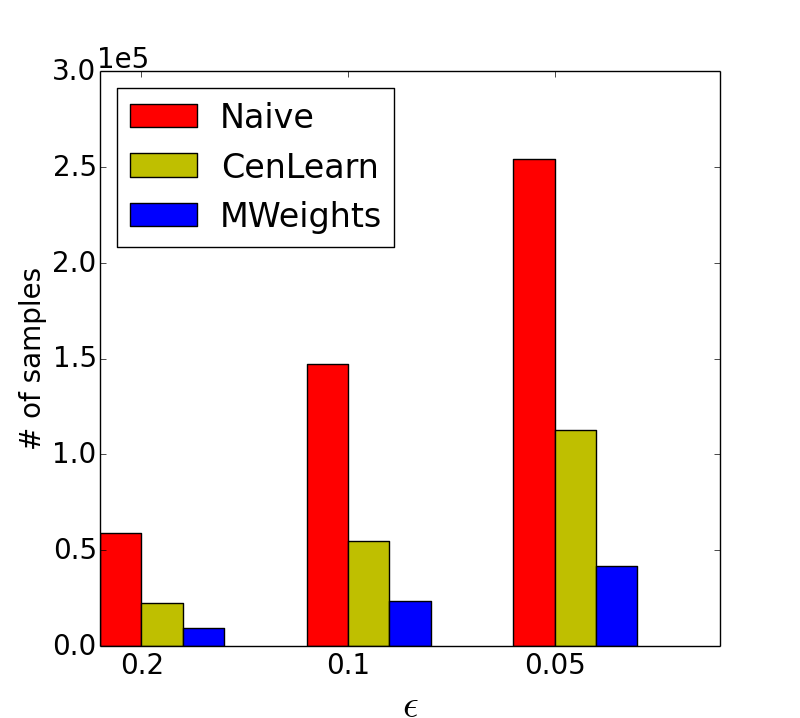}\label{fig:letter}}
     \caption{Sample complexity versus error threshold $\eps$.}
     \label{fig:dt}
\end{figure}

The experimental results are presented in \pref{fig:dt}. We test the algorithms for each data set using multiple values of the error threshold $\eps$, and report the sample complexity for \naive, \improved\ and \blum.

In \pref{fig:magic-even}, we notice that \naive\ uses less samples than its competitors. This phenomenon is predictable because in \magiceven, $D_1, \ldots, D_k$ are constructed via random partitioning, which is the easiest case for \naive. Since \improved\ and \blum\ need to train multiple classifiers, each classifier will get fewer training samples than \naive\ when the total budgets are the same.

In \pref{fig:magic-1} and \pref{fig:magic-2}, $D_1, \ldots, D_k$ are constructed in a way  that $D_2, D_3, \ldots, D_k$ are identical, and $D_1$ is very different from other distributions. Thus the overall distribution (i.e., $D = \frac{1}{k}\sum_{i=1}^kD_i$) used to train \naive\ is quite different from the original data set. One can observe from those two figures that \improved\ still works quite well while \naive\ suffers.
 
In \pref{fig:magic-1}-\pref{fig:letter}, one can observe that \improved\ uses fewer samples than its competitors in almost all cases, which shows the superiority of our proposed algorithm. 
\blum\ outperforms \naive\ in general.  However, \naive\ uses slightly fewer samples than \blum\ in some cases (e.g., \pref{fig:wine}). This may due to the fact that the distributions $D_1, \ldots, D_k$ in those cases are not hard enough to show the superiority of \blum\ over \naive.

To summarize, our experimental results show that \improved\ and \blum\ need fewer samples than \naive\ when the input distributions $D_1, \ldots, D_k$ are sufficiently different. \improved\ consistently outperforms \blum, which may due to the facts that \improved\ has better theoretical guarantees and is more straightforward to implement.

\section{Conclusion}
\label{sec:conclusion}

In this paper we consider the collaborative PAC learning problem. We have proved the optimal overhead ratio and sample complexity, and conducted experimental studies to show the superior performance of our proposed algorithms. 

One open question is to consider the \emph{balance} of the numbers of queries made to each player, which can be measured by the ratio between the largest number of queries made to a player and the average number of queries made to the $k$ players. The proposed algorithms in this paper may attain a balance ratio of $\Omega(k)$ in the worst case.  It will be interesting to investigate:
\begin{enumerate}
\item Whether there is an algorithm with the same sample complexity but better balance ratio? 
\item What is the optimal trade-off between sample complexity and balance ratio?
\end{enumerate}

\newpage

\subsubsection*{Acknowledgments}
Jiecao Chen and Qin Zhang are supported in part by NSF CCF-1525024 and IIS-1633215. Part of the work was done when Yuan Zhou was visiting the Shanghai University of Finance and Economics.

\bibliography{paper}
\bibliographystyle{abbrv}

\newpage
\appendix
\section{Concentration Bounds}

\begin{proposition}[Multiplicative Chernoff bound]\label{prop:chernoff}
Let $X_i (1 \leq i \leq n)$ be independent random variables with values in $[0,1]$.
Let $X =  \frac{1}{n} \sum_{i=1}^{n} X_i$. For every $0 \leq \epsilon \leq 1 $, we have that
\[
 \Pr\big[X < (1-\epsilon) \E[X] \big] < \exp\left( - \frac{\epsilon^2 n \E[X]}{2} \right),
\]
\[
 \Pr\big[ X > (1+\epsilon) \E[X] \big] < \exp\left(-\frac{\epsilon^2 n \E[X]}{3}\right).
\]
\end{proposition}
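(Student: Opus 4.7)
The plan is to use the classical Chernoff moment-generating-function (MGF) method. Let $\mu = n\E[X] = \sum_{i=1}^{n} \E[X_i]$. For the upper tail, I will fix a parameter $\lambda > 0$ and apply Markov's inequality to the non-negative random variable $\exp(\lambda n X) = \prod_{i=1}^{n} \exp(\lambda X_i)$:
\[
\Pr[X > (1+\epsilon)\E[X]] \le \frac{\E[\exp(\lambda n X)]}{\exp(\lambda(1+\epsilon)\mu)}.
\]
By independence, the numerator factors as $\prod_{i=1}^{n} \E[\exp(\lambda X_i)]$, so the task reduces to bounding the per-coordinate MGF.

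Next, I will exploit that each $X_i$ takes values in $[0,1]$. By convexity of $t \mapsto e^{\lambda t}$, the chord bound $e^{\lambda t} \le 1 + (e^\lambda - 1)t$ holds for all $t \in [0,1]$. Taking expectations and then using $1 + x \le e^x$ gives
\[
\E[\exp(\lambda X_i)] \le 1 + (e^\lambda - 1)\E[X_i] \le \exp\bigl((e^\lambda - 1)\E[X_i]\bigr).
\]
Multiplying over $i$ yields $\E[\exp(\lambda n X)] \le \exp((e^\lambda - 1)\mu)$, so
\[
\Pr[X > (1+\epsilon)\E[X]] \le \exp\bigl((e^\lambda - 1)\mu - \lambda(1+\epsilon)\mu\bigr).
\]
Optimizing the exponent over $\lambda > 0$ gives the minimizer $\lambda^\star = \ln(1+\epsilon)$, producing the well-known bound $\bigl(e^\epsilon/(1+\epsilon)^{1+\epsilon}\bigr)^\mu$. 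The last step is an elementary analytic inequality: for $\epsilon \in [0,1]$,
\[
(1+\epsilon)\ln(1+\epsilon) - \epsilon \;\ge\; \frac{\epsilon^2}{3},
\]
which, plugged into the exponent, yields the desired upper-tail bound $\exp(-\epsilon^2 \mu / 3)$.

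For the lower tail, the plan is symmetric: apply Markov to $\exp(-\lambda n X)$ for $\lambda > 0$ (equivalently, run the same argument on $1 - X_i$, which also lies in $[0,1]$). The same chord bound yields $\E[\exp(-\lambda X_i)] \le \exp((e^{-\lambda} - 1)\E[X_i])$, and optimizing at $\lambda^\star = -\ln(1-\epsilon)$ produces the bound $\bigl(e^{-\epsilon}/(1-\epsilon)^{1-\epsilon}\bigr)^\mu$. I will then invoke the tighter inequality
\[
(1-\epsilon)\ln(1-\epsilon) + \epsilon \;\ge\; \frac{\epsilon^2}{2} \qquad \text{for } \epsilon \in [0,1],
\]
which gives the sharper constant $\exp(-\epsilon^2 \mu / 2)$ on this side.

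The main obstacle I expect is proving the two one-variable inequalities cleanly with the correct constants ($1/3$ versus $1/2$). Both can be handled by setting $h_{\pm}(\epsilon) = (1\pm\epsilon)\ln(1\pm\epsilon) \mp \epsilon - c_{\pm}\epsilon^2$ and checking $h_{\pm}(0)=0$ together with the sign of $h'_{\pm}$ on $[0,1]$; for the upper-tail constant $1/3$ one needs a second differentiation since $h'_{+}(0)=0$ and $h''_{+}(0)=1/3$ is the exact Taylor coefficient that makes the bound tight, while the $1/2$ constant on the lower side follows from $-\ln(1-\epsilon) - \epsilon \ge \epsilon^2/2$ via term-by-term comparison of the Taylor series. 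Everything else is routine MGF calculus.
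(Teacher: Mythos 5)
The paper states this proposition as a standard concentration bound and gives no proof of its own, so there is nothing to diverge from; your exponential-moment (Chernoff/Bernstein) derivation is the standard argument and is correct, including the use of the chord bound $e^{\lambda t}\le 1+(e^{\lambda}-1)t$ to handle general $[0,1]$-valued (not just Bernoulli) variables and the two elementary inequalities $(1+\epsilon)\ln(1+\epsilon)-\epsilon\ge\epsilon^2/3$ and $(1-\epsilon)\ln(1-\epsilon)+\epsilon\ge\epsilon^2/2$ on $[0,1]$. The only point to handle carefully in a full write-up is that for the upper tail $h_+''(\epsilon)=\tfrac{1}{1+\epsilon}-\tfrac{2}{3}$ changes sign at $\epsilon=\tfrac12$, so you should argue that $h_+'$ is increasing then decreasing and nonnegative at both endpoints ($h_+'(0)=0$, $h_+'(1)=\ln 2-\tfrac23>0$) rather than relying on convexity alone.
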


\begin{definition}[Supermartingale Random Variables]
A discrete-time \emph{supermartingale} is a sequence of random variables $X_0, X_1, X_2, \dots$ that satisfies for any time $t$,
\[
\E |X_t| < \infty, \text{~and~} \E [X_{t + 1} | X_0, \dots, X_t] \leq X_t .
\]
\end{definition}

\begin{proposition}[Azuma's inequality for supermartingale random variables]\label{prop:azuma}
Suppose $\{X_k : k = 0, 1, 2, \dots\}$ is a supermartingale and $|X_k - X_{k+1}| \leq c_k$ almost surely. Then for all positive integers $T$ and all positive reals $\theta$,
\[
\Pr[X_T - X_0 \geq \theta] \leq \exp\left ({-\theta^2 \over 2 \sum_{k=0}^{T-1} c_k^2} \right). 
\]
\end{proposition}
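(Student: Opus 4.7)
The plan is to follow the standard Chernoff/exponential moment approach adapted to supermartingales. Set $Y_k = X_k - X_{k-1}$ for $k \geq 1$, so that $X_T - X_0 = \sum_{k=1}^{T} Y_k$, with $|Y_k| \leq c_{k-1}$ almost surely and $\E[Y_k \mid X_0,\dots,X_{k-1}] \leq 0$ by the supermartingale property. For any $\lambda > 0$, Markov's inequality applied to $\exp(\lambda(X_T - X_0))$ gives
\[
\Pr[X_T - X_0 \geq \theta] \;\leq\; e^{-\lambda \theta}\, \E\!\left[\exp\!\left(\lambda \sum_{k=1}^{T} Y_k\right)\right],
\]
so the whole job reduces to controlling the joint moment generating function and then optimizing $\lambda$.

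The key intermediate step I would prove is a Hoeffding-type lemma: if $Y$ is any random variable with $|Y| \leq c$ and $\E[Y \mid \mathcal{G}] \leq 0$ for some $\sigma$-algebra $\mathcal{G}$, then for every $\lambda \geq 0$,
\[
\E\!\left[e^{\lambda Y} \mid \mathcal{G}\right] \;\leq\; \exp\!\left(\tfrac{\lambda^2 c^2}{2}\right).
\]
The standard argument is to write $y \in [-c,c]$ as the convex combination $\tfrac{c+y}{2c}(c) + \tfrac{c-y}{2c}(-c)$, apply convexity of $t \mapsto e^{\lambda t}$ to dominate $e^{\lambda y}$ by a linear function of $y$, take conditional expectation, and use $\E[Y \mid \mathcal{G}] \leq 0$ together with the elementary inequality $\tfrac{1}{2}(e^{\lambda c} + e^{-\lambda c}) \leq e^{\lambda^2 c^2/2}$ (which itself follows from a Taylor expansion term-by-term). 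This is the step I expect to be the main obstacle, because (i) the usual Hoeffding lemma is stated for mean-zero variables and one has to check carefully that a nonpositive conditional mean still gives the same bound when $\lambda \geq 0$, and (ii) one needs the convexity/linearization to go through measurably so the conditional expectation can be taken.

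With the lemma in hand, I would peel off the factors one at a time using the tower property. Conditioning on $\mathcal{F}_{T-1} = \sigma(X_0,\dots,X_{T-1})$,
\[
\E\!\left[\exp\!\left(\lambda \sum_{k=1}^{T} Y_k\right)\right] = \E\!\left[\exp\!\left(\lambda \sum_{k=1}^{T-1} Y_k\right)\cdot \E\!\left[e^{\lambda Y_T} \mid \mathcal{F}_{T-1}\right]\right] \leq e^{\lambda^2 c_{T-1}^2/2}\,\E\!\left[\exp\!\left(\lambda \sum_{k=1}^{T-1} Y_k\right)\right],
\]
and iterating $T$ times yields $\E[\exp(\lambda(X_T - X_0))] \leq \exp\!\bigl(\tfrac{\lambda^2}{2} \sum_{k=0}^{T-1} c_k^2\bigr)$.

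Plugging this into the Markov bound and choosing $\lambda = \theta / \sum_{k=0}^{T-1} c_k^2$ (the value that minimizes $-\lambda \theta + \tfrac{\lambda^2}{2}\sum c_k^2$) yields
\[
\Pr[X_T - X_0 \geq \theta] \;\leq\; \exp\!\left(-\frac{\theta^2}{2 \sum_{k=0}^{T-1} c_k^2}\right),
\]
which is the stated inequality. The only pieces that require care are the integrability condition $\E|X_t| < \infty$ (used to justify tower manipulations) and the conditional Hoeffding lemma described above; everything else is routine algebra and optimization.
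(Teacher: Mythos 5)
Your proof is correct: the exponential-moment method, the conditional Hoeffding lemma for variables with nonpositive conditional mean, the tower-property peeling, and the choice $\lambda = \theta/\sum_{k=0}^{T-1} c_k^2$ all go through exactly as you describe, and you have correctly identified the one place where the supermartingale case differs from the martingale case (the sign of $\E[Y\mid\mathcal{G}]$ only helps because $e^{\lambda c}-e^{-\lambda c}\ge 0$ for $\lambda\ge 0$). The paper itself states this proposition without proof, treating it as a standard concentration bound, so there is nothing to compare against; your argument is the standard textbook derivation and is complete.
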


\section{Omitted Proofs in \pref{sec:basic}}\label{app:basic}

\begin{proof}[of \pref{lem:utest}]
For each $i$ such that $\err_{D_i}(g) \leq \frac{\eps}{12}$, by \pref{prop:chernoff}, we have that 
\[
\Pr\left[\err_{T_i} (g) > \frac{\eps}{2}\right] \leq \exp\left(-  \frac{432}{\eps}\ln \left(\frac{k \cdot 4(t+1)^2}{\delta}\right) \cdot \frac{\eps}{12^2 \cdot 3} \right) = \frac{\delta}{k\cdot 4(t+1)^2}.
\]
Therefore, with probability at least $1 - \frac{\delta}{k \cdot 4(t+1)^2}$, $i$ is included in the output of \utest.

Similarly, for each $i$ such that $\err_{D_i}(g) > \frac{\eps}{4}$, by \pref{prop:chernoff}, we have that 
\[
\Pr\left[\err_{T_i} (g) \leq \frac{\eps}{2}\right] \leq \exp\left(-  \frac{432}{\eps}\ln \left(\frac{k \cdot 4(t+1)^2}{\delta}\right) \cdot \frac{\eps}{12^2 \cdot 2} \right) \leq \frac{\delta}{k\cdot 4(t+1)^2}.
\]
Therefore, with probability at least $1 - \frac{\delta}{k \cdot 4(t+1)^2}$, $i$ is not included in the output of \utest.

The lemma is now proved by a union bound over at most $k$ players.
\end{proof}

\begin{proof}[of \pref{lem:maj}]
Suppose for contradiction that $\err_D(g) > \eps$. Given a sample $(x, y) \sim D$, when $g(x) \neq y$, we know that for more than half of the $g_i$'s, we have $g_i(x) \neq y$. Therefore, we have 
\begin{align}\label{eq:lem:maj:1}
\sum_{i = 1}^{m} \Pr_{(x, y) \sim D}[g(x) \neq y \text{~and~} g_i(x) \neq y] > \frac{\eps m}{2} .
\end{align}
On the other hand, by discussing whether $g_i$ is a good candidate for $D$, we have
\begin{eqnarray*}
&& \sum_{i = 1}^{m} \Pr_{(x, y) \sim D}[g(x) \neq y \text{~and~} g_i(x) \neq y] 
\\ 
&\leq&  \sum_{i : g_i \text{~good}}\Pr_{(x, y) \sim D}[g_i(x) \neq y]  + \sum_{i: g_i \text{~not good}} \Pr_{(x, y) \sim D}[g(x) \neq y]  \\
&\leq& \sum_{i : g_i \text{~good}} \frac{\eps}{4} + \sum_{i : g_i \text{~not good}} \eps \leq .7 m \cdot \frac{\eps}{4} + .3 m \cdot \eps < .5 m \eps ,
\end{eqnarray*}
which contradicts \eqref{eq:lem:maj:1}.
\end{proof}

\section{Proof of Theorem~\ref{thm:lb}}
\label{app:lb}

Before proving Theorem~\ref{thm:lb} we need a result from \cite{EHKV89}. Let $\phi_d$ be the following input distribution. 
\begin{itemize}
\item Instance space $\calY_{d} = \{0, 1, \ldots, d-1, \perp\}$.

\item Hypothesis class: $\calG_{d}$ is the collection of all binary functions on $\calY_{d}$ that map $\perp$ to $0$.

\item Target function: $g^*$ is chosen uniformly at random from $\calG_{d}$.

\item Player's distribution: $\Pr[\perp] = 1 - 8\eps$, and $\Pr[0] = \ldots \Pr[d-1] = 8\eps/k$.
\end{itemize}

\begin{lemma}[\cite{EHKV89}]
\label{lem:primitive}
For any $\eps, \delta \in (0, 0.01)$, any $(\eps, \delta)$-learning algorithm $\calA$ on $\phi_{d}$ needs $\Omega(d/\eps)$ samples in expectation, where the expectation is taken over the randomness used in obtaining the samples and the randomness used in drawing the input from $\phi_{d}$.
\end{lemma}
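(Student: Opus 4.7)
The plan is to use a standard Bayesian / information-theoretic lower bound, exploiting the fact that under the prior built into $\phi_d$, the target function has $d$ independent uniformly random bits which the learner must recover from samples that are uninformative most of the time.

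First, by Yao's minimax principle, to lower bound the worst-case expected sample complexity of any randomized $(\eps, \delta)$-learner it suffices to fix a distribution over $g^*$ and lower bound the expected sample complexity of any deterministic algorithm against that prior. The natural prior is $g^*$ uniform over $\calG_d$, which is exactly the prior already built into the definition of $\phi_d$. Throughout the argument I fix an arbitrary deterministic learner.

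Second, I analyze what information the samples carry. Each sample from $\phi_d$ equals $\perp$ with probability $1 - 8\eps$ and is otherwise a uniformly random point of $\{0,1,\ldots,d-1\}$. Since $g^*(\perp) = 0$ by construction, the $\perp$-samples carry no information about $g^*$, while a non-$\perp$ sample at point $i$ reveals exactly the bit $g^*(i)$. For $m$ samples, let $S \subseteq \{0,\ldots,d-1\}$ be the set of distinct non-$\perp$ points observed. A coupon-collector calculation gives $\E[|S|] = d\bigl(1 - (1 - 8\eps/d)^m\bigr)$, which is at most (say) $\tfrac{d}{8}$ whenever $m \le c\, d/\eps$ for a sufficiently small absolute constant $c$.

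Third, I translate the accuracy criterion. For any output $\hat g$, $\err_{\phi_d}(\hat g) = (8\eps/d) \cdot |\{i \in \{0,\ldots,d-1\} : \hat g(i) \neq g^*(i)\}|$, so $\err_{\phi_d}(\hat g) \le \eps$ requires at most $d/8$ disagreements on the non-$\perp$ points. Because $g^*$ is uniform a priori, the posterior of $g^*(i)$ for each $i \notin S$ is still uniform on $\{0,1\}$ and independent across unseen points and of the algorithm's entire transcript. Hence, regardless of how $\hat g$ chooses its values on $\{0,\ldots,d-1\} \setminus S$, the number of mistakes on the unseen set is exactly $\mathrm{Binomial}(d - |S|,\, 1/2)$ conditional on $S$.

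Finally, I combine the pieces. For $m \le c\,d/\eps$ with $c$ small, Markov on $|S|$ gives $\Pr[|S| \le 3d/4] \ge 0.8$. Conditional on $|S| \le 3d/4$, the mistake count is $\mathrm{Binomial}(n, 1/2)$ for some $n \ge d/4$, which exceeds $d/8$ with probability at least $\tfrac12$ (the mean is $n/2 \ge d/8$, so this is essentially a median statement). Multiplying, $\Pr[\err_{\phi_d}(\hat g) > \eps] \ge 0.4 > \delta$, contradicting the $(\eps, \delta)$-learning guarantee. To promote the worst-case-$m$ statement to an expected-$m$ statement as required, apply Markov to $m$ itself: if $\E[m] = o(d/\eps)$ then $m \le c\,d/\eps$ with probability close to $1$, and the preceding argument still yields failure probability $>\delta$. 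The main obstacle is the quantitative sharpening in the last step: we cannot be content with showing the \emph{expected} number of mistakes exceeds $d/8$, and must instead couple the Markov bound on $|S|$ with a binomial tail (or median) bound to force the \emph{failure probability} above $\delta = 0.01$.
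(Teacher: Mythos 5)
The paper offers no proof of this lemma at all: it is imported as a black box from \cite{EHKV89}, so there is nothing internal to compare against. Your argument is a correct, self-contained reconstruction of the classical VC lower bound from that reference, and it is the standard one: a prior in which $g^*$ consists of $d$ independent uniform bits hidden behind a point mass $1-8\eps$ at $\perp$, a coupon-collector computation showing that $m \le c\,d/\eps$ samples reveal at most a small constant fraction of those bits in expectation, and a binomial anti-concentration step converting ``many unrevealed bits'' into ``error exceeds $\eps$ with constant probability,'' which beats $\delta \le 0.01$. (You also silently and correctly read the paper's $\Pr[0]=\dots=\Pr[d-1]=8\eps/k$ as $8\eps/d$; the former is a typo, since $k$ plays no role in the single-player instance and the masses must sum to one.) Two places deserve a little more care than your sketch gives them, though neither is a real gap. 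First, at the boundary case $|S| = 3d/4$ the mistake count is $\mathrm{Binomial}(d/4,1/2)$ and $\Pr[\mathrm{Bin}(d/4,1/2) > d/8]$ is only $\tfrac12 - O(1/\sqrt{d})$ rather than $\ge \tfrac12$ because of the strict inequality and the atom at the median; this is absorbed either by the huge slack against $\delta \le 0.01$ or, more cleanly, by tightening $c$ so that $\E[|S|] \le d/100$, whence the unseen set has size at least $0.9d$ with probability $0.9$ and a Chernoff bound drives the conditional failure probability close to $1$. Second, in passing from expected to worst-case sample count, the stopping time $m$ may depend on the observed labels, so one should condition on the \emph{entire transcript} (which determines $S$, $\hat g$, and whether the bad events occurred) and only then invoke the fact that the labels of unseen points remain i.i.d.\ uniform; union-bounding $\{m > c\,d/\eps\}$ with $\{|S| > 3d/4\}$ before this conditioning, as you implicitly do, makes the argument go through.
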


We prove Theorem~\ref{thm:lb} in two cases: $d > k$ and $d \le k$.

\paragraph{The case $d > k$.}
Let $\sigma(i, j) = (i - 1) \cdot d/k + j$. We create the following hard input distribution, denoted by $\Phi_{k,d}$.  

\begin{itemize}
\item Instance space: $\calX_d = \{0, 1, \ldots, d-1, \perp\}$.

\item Hypothesis class: $\calF_d$ is the collection of all binary functions on $\calX_d$ that map $\perp$ to $0$.

\item Target function: $f^*$ is chosen uniformly at random from $\calF_d$.

\item Player $i$'s distribution $D_i$ (for each $i \in [k]$): Assigns weights to items in $\{\sigma(i,0), \sigma(i,1), \ldots, \sigma(i,d-1), \perp\}$ as follows: $\Pr[\perp] = 1 - 8\eps$, and $\Pr[\sigma(i,0)] = \ldots \Pr[\sigma(i,d/k-1)] = 8\eps/k$.  For any other item $x \in \calX_d$, $\Pr[x] = 0$.
\end{itemize}
Note that the induced input distribution for the $i$-th player is the same as $\phi_{d/k}$ for any $i \in [k]$.

We have the following lemma.   
It is easy to see that Lemma~\ref{lem:reduction} and Lemma~\ref{lem:primitive} imply a sample complexity $\Omega(d \ln k / \eps)$ for any $(\eps, \delta)$-learning algorithm on input distribution $\Phi_{k,d}$ in expectation.

\begin{lemma}
\label{lem:reduction}

If there exists an $(\eps, \delta)$-learning algorithm $\calA'$ that uses $m$ samples in expectation on input distribution $\Phi_{k,d}$, then there exists an $(\eps, \frac{10}{9k} \cdot \delta)$-learning algorithm $\calA$ that uses $\frac{10}{9k} \cdot m$ samples in expectation on input distribution $\phi_{d/k}$.
\end{lemma}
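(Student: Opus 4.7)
The plan is to construct $\calA$ as a random-embedding reduction: a single-player instance drawn from $\phi_{d/k}$ is placed into a uniformly chosen slot $i^* \in [k]$ of a simulated $\Phi_{k,d}$ instance, and $\calA'$ is invoked on the simulation as a black box. The construction exploits two structural features of $\Phi_{k,d}$: the target $f^*$ is uniform over $\calF_d$, and the supports of $D_1, \ldots, D_k$ are pairwise disjoint away from $\perp$ (whose label is always $0$), so that the $k-1$ ``other'' players can be simulated internally with fresh random labels, without any knowledge of $g^*$.

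Construction. Given a single-player sample stream from $\phi_{d/k}$ against unknown target $g^*$: (1) draw $i^* \in [k]$ uniformly at random; (2) identify $\calY_{d/k} = \{0, \ldots, d/k - 1, \perp\}$ with $\{\sigma(i^*, 0), \ldots, \sigma(i^*, d/k - 1), \perp\}$ via the natural bijection; (3) simulate $\calA'$, serving each request for a player-$i^*$ sample from the single-player stream (relabeled via the bijection) and serving each request for a player-$j$ sample with $j \neq i^*$ by drawing $x \sim D_j$ internally and attaching label $0$ if $x = \perp$ or else a lazily sampled uniformly random bit; (4) output the restriction of $\calA'$'s returned classifier to the $i^*$-slot, translated back to $\calY_{d/k}$.

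Faithfulness and sample complexity. Because $g^*$ is uniform on $\calG_{d/k}$ and the synthetic labels on the other players' disjoint supports are independent uniform bits, the joint distribution of $(f^*, \text{samples shown to } \calA')$ is exactly the $\Phi_{k,d}$ input distribution, so $\calA'$'s PAC guarantee applies and its output $\hat f$ satisfies $\err_{D_i}(\hat f) \leq \eps$ for all $i$ with probability at least $1 - \delta$. Letting $M_i$ denote the (random) number of samples $\calA'$ draws from player $i$, we have $\sum_i \E[M_i] = m$, so $\E[M_{i^*}] = m/k \leq \frac{10m}{9k}$ since $i^*$ is chosen independently of $\calA'$'s coins.

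Failure probability and main obstacle. The crux is to sharpen the naive bound $\Pr[\calA \text{ fails}] \leq \Pr[\calA' \text{ fails on } i^*] \leq \delta$ to the claimed $\frac{10\delta}{9k}$. The plan is to exploit the disjoint-support structure: conditional on the samples and $\calA'$'s coins, $\hat f$ is fixed and the event $\{\err_{D_i}(\hat f) > \eps\}$ depends only on $f^*|_{\text{supp}(D_i) \setminus \{\perp\}}$, whose posteriors across $i$ are mutually independent. Combined with averaging over the uniform $i^*$ and the small-$\delta$ estimate $-\log(1 - \delta) \leq \frac{10}{9}\delta$, this should let us upper bound $\sum_i \delta_i$ (where $\delta_i := \Pr[\calA' \text{ fails on } i]$) by $\frac{10}{9}\delta$, and thus $\Pr[\calA \text{ fails}] = \frac{1}{k}\sum_i \delta_i \leq \frac{10\delta}{9k}$. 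The main obstacle is converting the conditional independence into an unconditional product bound: for arbitrary adaptive $\calA'$, the conditional failure probabilities may be correlated as functions of $(\text{samples}, \text{coins})$, and one must first reduce to a normalized form in which $\hat f|_{\text{supp}(D_i)}$ can be taken to depend only on player-$i$ samples (which is WLOG because samples from $j \neq i$ carry no information about $f^*|_{\text{supp}(D_i)}$) before the product decomposition $\Pr[\text{all succeed}] \geq \prod_i (1 - \delta_i)$ and the ensuing averaging go through cleanly.
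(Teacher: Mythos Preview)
Your random-embedding construction matches the paper's starting point, but there is a genuine gap in the failure-probability analysis, and the missing ingredient is precisely what the paper uses and you do not: a \emph{rejection/repeat} step.

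Your plan is to show $\sum_i \delta_i \leq \tfrac{10}{9}\delta$ via (conditional) independence and a product bound. This cannot work for an arbitrary $(\eps,\delta)$ learner. Consider an adversarial $\calA'$ that draws enough samples to learn $f^*$ exactly, then flips a single internal coin: with probability $1-\delta$ it outputs $f^*$, and with probability $\delta$ it outputs the complement of $f^*$ on every non-$\perp$ coordinate. This algorithm is $(\eps,\delta)$-PAC on $\Phi_{k,d}$, yet the events $\{\err_{D_i}(\hat f)>\eps\}$ are perfectly correlated: each $\delta_i=\delta$, so $\sum_i\delta_i=k\delta$, and the single-player algorithm you build has failure probability exactly $\delta$, not $\tfrac{10\delta}{9k}$. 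Your conditional-independence observation is true but vacuous here (conditioned on samples and coins, every $E_i$ is a constant), and your proposed normalization ``$\hat f|_{\mathrm{supp}(D_i)}$ depends only on player-$i$ samples'' is already satisfied in this example; the coupling lives in the \emph{shared internal randomness}, which your information-theoretic WLOG cannot remove without destroying the joint guarantee $\Pr[\cup_i E_i]\le\delta$ that you need to use.

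The paper sidesteps all of this. After $\calA'$ returns $\hat f$, the single-player reduction \emph{checks} whether $\err_{D_j}(\hat f)\le\eps$ for every simulated player $j\neq\ell$; this is free, since the reduction manufactured those players and knows their targets and distributions exactly. If the check fails, it discards the run and restarts on a fresh simulated instance. Conditioned on acceptance, the reduction errs only on the event ``$\hat f$ fails on player $\ell$ but on no other player''; these events are disjoint as $\ell$ varies, so their probabilities sum to at most $\delta$, and averaging over the uniform $\ell$ gives failure at most $\tfrac{\delta}{k}\cdot\tfrac{1}{\Pr[\text{accept}]}\le\tfrac{\delta}{k(1-\delta)}\le\tfrac{10\delta}{9k}$. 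The same $\tfrac{1}{1-\delta}\le\tfrac{10}{9}$ factor accounts for the expected number of restarts and yields the $\tfrac{10m}{9k}$ sample bound. You should add this verify-and-restart step; without it the claimed $\tfrac{10}{9k}\cdot\delta$ bound is false.
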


\begin{proof}
We construct $\calA'$ for input distribution $\phi_{d/k}$ using $\calA$ for input distribution $\Phi_{k,d}$ as follows.
\begin{enumerate}
\item $\calA'$ draws an input instance $(\calF_d, f^*, \{D_i\}_{i \in [k]})$ from $\Phi_{k,d}$, and samples $\ell$ uniformly at random from $[k]$.

\item $\calA'$ simulates $\calA$ on instance $(\calF_d, f^*, \{D_i\}_{i \in [k]})$ with the input distribution of the $\ell$-th player replaced by $\phi_{d/k}$.  Every time $\calA$ draws a sample from player $i \neq \ell$, $\calA'$ does the same (which is free since $\calA'$ already knows $(\calF_d, f^*, \{D_i\}_{i \in [k]})$), and passes the sample (and its label) to $\calA$. Every time $\calA$ draws a sample from player $\ell$, $\calA'$ samples from distribution $\phi_{d/k}$ instead.  Let $(u, v)\ (u \in \{0, 1, \ldots, d/k-1, \perp\}, v \in \{0,1\})$ be the sample.  If $u = \perp$ then $\calA'$ passes $(\perp, 0)$ to $\calA$, otherwise $\calA'$ passes $(\sigma(\ell, u), v)$ to $\calA$.

\item When $\calA$ terminates and returns a function $f$ on $\calX_d$, $\calA'$ checks whether the error of $f$ on each $D_i\ (i \neq \ell)$ is no more than $\eps$. If yes, $\calA'$ returns $f'$ defined as $f'(\perp) = f(\perp)$, and $f'(u) = f(\sigma(\ell, u))$.  Otherwise $\calA'$ repeats the simulation on a new input instance from $\Phi_{k,d}$.
\end{enumerate}

We have the following claims, whose proofs can be found in \cite{BHPQ17} for a similar reduction.  The two claims finish the proof of Lemma~\ref{lem:reduction}.
\begin{claim}
\label{cla:error-reduction}
$\calA'$ is an $(\eps, \frac{10}{9k} \cdot \delta)$-learning algorithm on $\phi_{d/k}$, where $\delta$ is failure probability of $\calA$.
\end{claim}

\begin{claim}
\label{cla:comm-reduction}
$\calA'$ uses at most $\frac{10}{9k} \cdot m$ samples in expectation, where $m$ is the sample complexity of $\calA$.
\end{claim}

\paragraph{The case $d \le k$.} We againt start by constructing a hard input distribution for the $k$ players, denoted by $\Psi_{k,d}$.  We first construct a hard input distribution for the first $d$ players.  The construction is the same as the one used in \cite{BHPQ17} for the case $k = d$.

\begin{itemize}
\item Instance space: $\calX_d = \{1, 2, \ldots, d, \perp\}$.

\item Hypothesis class: $\calF_d$ is the collection of all binary functions on $\calX_d$ that map $\perp$ to $0$.

\item Target function: $f^*$ is chosen uniformly at random from $\calF_d$.

\item Player's distribution $D_i\ (i \in [d])$: with probability $1/2$, the $i$-th player assigns weights to items in $\{1, 2, \ldots, \perp\}$ as $\Pr[\perp] = 1$ and $\Pr[x] = 0$ for all other items $x \in \calX_d$; with probability $1/2$, it assigns weights as $\Pr[\perp] = 1 - 2\eps$, $\Pr[i] = 2\eps$, and $\Pr[x] = 0$ for all other items $x \in \calX_d$.
\end{itemize}
We then assign the same input distribution for the next $d$ players, the next next $d$ players, and so on.  In other words, we duplicate the input distribution of the first $d$ players for $k/d$ times.  Finally we randomly permute the $k$ players. 


Let $\psi$ denote the input distribution of $\Psi_{1,1}$.  We have the following lemma.
\begin{lemma}[\cite{BHPQ17}]
\label{lem:primitive-2}
For any $\eps, \delta \in (0, 0.01)$, any $(\eps, \delta)$-learning algorithm $\calA$ on $\psi$ needs $\Omega(\log(1/\delta)/\eps)$ samples in expectation, where the expectation is taken over the randomness used in obtaining the samples and the randomness used in drawing the input from $\psi$.
\end{lemma}

We use the following reduction.
\begin{enumerate}
\item $\calA'$ draws an input instance $(\calF_d, f^*, \{D_i\}_{i \in [k]})$ from $\Psi_{k,d}$, and samples $\ell$ uniformly at random from $[k]$.

\item $\calA'$ simulates $\calA$ on instance $(\calF_d, f^*, \{D_i\}_{i \in [k]})$ with the input distribution of the $\ell$-th player replaced by $\psi$.  Every time $\calA$ draws a sample from player $i \neq \ell$, $\calA'$ does the same (which is free since $\calA'$ already knows $(\calF_d, f^*, \{D_i\}_{i \in [k]})$), and passes the sample (and its label) to $\calA$. Every time $\calA$ draws a sample from player $\ell$, $\calA'$ samples from distribution $\psi$ instead.  Let $(u, v)\ (u \in \{1, \perp\}, v \in \{0,1\})$ be the sample. If $u = \perp$ then $\calA'$ passes $(\perp, 0)$ to $\calA$, otherwise $\calA'$ passes $(\ell, v)$ to $\calA$.

\item When $\calA$ terminates and returns a function $f$ on $\calX_d$, $\calA'$ checks whether the error of $f$ on each $D_i\ (i \neq \ell)$ is no more than $\eps$. If yes, $\calA'$ returns $f'$ defined as $f'(\perp) = f(\perp)$, and $f'(1) = f(\ell)$.  Otherwise $\calA'$ repeats the simulation on a new input instance from $\Psi_{k,d}$.
\end{enumerate}

Claim~\ref{cla:comm-reduction} still holds for the above reduction.  While Claim~\ref{cla:error-reduction} changes slightly to the following (by replacing $k$ in Claim~\ref{cla:error-reduction} to $d$).  
\begin{claim}
\label{cla:error-reduction-2}
$\calA'$ is an $(\eps, \frac{10}{9d} \cdot \delta)$-learning algorithm for the primitive problem, where $\delta$ is the failure probability of $\calA$.
\end{claim}

The proof is very similar to that for Claim~\ref{cla:error-reduction}.  The only difference is the following:  Let $p_i$ be the probability that on a random input instance sampled from $\Psi_{k,d}$, the function $f$ returned by $\calA$ satisfies $\err_{D_\ell}(f)> \eps$ and $\err_{D_i}(f) \le \eps$ for any $i \neq \ell$.  We now have $\sum_{i \in [k]} p_i \le k/d \cdot \delta$ (due to the $k/d$ times of duplication of the input distribution for the first $d$ players), instead of $\sum_{i \in [k]} p_i \le \delta$ as the case for Claim~\ref{cla:error-reduction}.  This difference makes the final failure bound to be $ \frac{10}{9d} \cdot \delta$ instead of $\frac{10}{9k} \cdot \delta$ compared with Claim~\ref{cla:error-reduction}. 

\medskip
The $\Omega(k \ln d \log(1/\delta) / \eps)$ lower bound follows from Lemma~\ref{lem:primitive-2}, Lemma~\ref{cla:comm-reduction} and Lemma~\ref{cla:error-reduction-2}.
\end{proof}

\section{Experiment Implementation Details}
As mentioned, we did not try to optimize constants in our theoretical analysis. In our experiment, we tuned several parameters for both \blum\ and \improved~for better empirical performance. In particular, we made the following changes.
\begin{itemize}
\item We set $\bbS_{\eps, \delta} = \frac{d + \log \delta^{-1}}{10\eps}$.
\item In both \improved\ and \blum, we set the number of iterations ($\tT$ in \improved~ and $t$ in \blum) to $\lceil 10 \log k \rceil$.
\item In \ntest~(\pref{alg:ntest}) of \improved\ and the TEST process in \blum, we only drew $30/\eps$ samples from $D_i$ and returned $\{i\ |\ \err_{T_i} (g) \leq \frac{\eps}{2}\}$.
\end{itemize}

\end{document}